\definecolor{mydarkblue}{rgb}{0,0.08,0.45}
\definecolor{myblue}{HTML}{3b75c3}
\definecolor{myred}{HTML}{E33222}
\definecolor{mygreen}{HTML}{438773}
\definecolor{mymaroon}{RGB}{142,27,19}
\definecolor{maroon}{HTML}{800000}
\definecolor{mycite}{cmyk}{0.55,1,0,0.15}
\definecolor{codeblue}{rgb}{0.25,0.5,0.5}
\definecolor{codekw}{rgb}{0.85, 0.18, 0.50}
\definecolor{codegreen}{rgb}{0,0.6,0}
\definecolor{codegray}{rgb}{0.5,0.5,0.5}
\definecolor{codepurple}{rgb}{0.58,0,0.82}
\definecolor{backcolour}{rgb}{0.95,0.95,0.92}
\newtheorem{definition}{Definition}
\newtheorem{problem}{Problem}
\newtheorem{proposition}{Proposition}
\newtheorem{theorem}{Theorem}
\begin{document}

\title{Certified Defense on the Fairness of Graph Neural Networks}

\author{Yushun Dong}
\affiliation{%
 \institution{Florida State University}
 \city{Tallahassee}
 \state{Florida}
 \country{USA}}
\email{yd24f@fsu.edu}

\author{Binchi Zhang}
\affiliation{%
 \institution{University of Virginia}
 \city{Charlottesville}
 \state{Virginia}
 \country{USA}}
\email{epb6gw@virginia.edu}

\author{Hanghang Tong}
\affiliation{%
 \institution{University of Illinois at Urbana-Champaign}
 \city{Urbana}
 \state{Illinois}
 \country{USA}}
\email{htong@illinois.edu}

\author{Jundong Li}
\affiliation{%
 \institution{University of Virginia}
 \city{Charlottesville}
 \state{Virginia}
 \country{USA}}
\email{jundong@virginia.edu}

\renewcommand{\shortauthors}{Yushun Dong, Binchi Zhang, Hanghang Tong, and Jundong Li}

\begin{abstract}
Graph Neural Networks (GNNs) have emerged as a prominent graph learning model in various graph-based tasks over the years. Nevertheless, due to the vulnerabilities of GNNs, it has been empirically shown that malicious attackers could easily corrupt the fairness level of their predictions by adding perturbations to the input graph data. In this paper, we take crucial steps to study a novel problem of certifiable defense on the fairness level of GNNs. Specifically, we propose a principled framework named ELEGANT and present a detailed theoretical certification analysis for the fairness of GNNs. ELEGANT takes {\em any} GNN as its backbone, and the fairness level of such a backbone is theoretically impossible to be corrupted under certain perturbation budgets for attackers. Notably, ELEGANT does not make any assumptions over the GNN structure or parameters, and does not require re-training the GNNs to realize certification. Hence it can serve as a plug-and-play framework for any optimized GNNs ready to be deployed. We verify the satisfactory effectiveness of ELEGANT in practice through extensive experiments on real-world datasets across different backbones of GNNs and parameter settings. 
\end{abstract}

\begin{CCSXML}
<ccs2012>
   <concept>
       <concept_id>10010147.10010257</concept_id>
       <concept_desc>Computing methodologies~Machine learning</concept_desc>
       <concept_significance>500</concept_significance>
       </concept>
 </ccs2012>
\end{CCSXML}

\ccsdesc[500]{Computing methodologies~Machine learning}

%%
%% Keywords. The author(s) should pick words that accurately describe
%% the work being presented. Separate the keywords with commas.
\keywords{Graph Neural Networks; Algorithmic Fairness}

%%
%% This command processes the author and affiliation and title
%% information and builds the first part of the formatted document.
\maketitle

\section{Introduction}

Graph Neural Networks (GNNs) have emerged among the most popular models to handle learning tasks on graphs~\cite{DBLP:conf/iclr/KipfW17,velivckovic2017graph} and made remarkable achievements in various domains~\cite{feng2022twibot,li2022graph}.
Nevertheless, as GNNs are increasingly deployed in real-world decision-making scenarios, there has been an increasing societal concern on the fairness of GNN predictions. A primary reason is that most traditional GNNs do not consider 
fairness, and thus could exhibit bias against certain demographic subgroups. Here the demographic subgroups are usually divided by certain sensitive attributes, such as gender and race.
To prevent GNNs from biased predictions, multiple recent studies have proposed fairness-aware GNNs~\cite{agarwal2021towards,dai2021say,kangwww2022,dong2023reliant} such that potential bias could be mitigated.

Unfortunately, despite existing efforts towards fair GNNs, it remains difficult to prevent the corruption of their fairness level due to their common vulnerability of lacking adversarial robustness.
In fact, malicious attackers can easily corrupt the fairness level of GNNs by perturbing the node attributes (i.e., changing the values of node attributes) and/or the graph structure (i.e., adding and deleting edges)~\cite{hussain2022adversarial}, which could lead to serious consequences in the test phase~\cite{dai2021say,hussain2022adversarial}.
For example, GNNs have been leveraged to perform bail decision-making on the graph of defendants, where an edge between two defendants represents high profile similarity~\cite{agarwal2021towards}.
Yet, by simply injecting adversarial links in the graph data, attackers can make GNNs deliver advantaged predictions for a subgroup (e.g., individuals with a certain nationality) while damaging the interest of others~\cite{hussain2022adversarial}. 
Hence achieving defense over the fairness of GNNs is crucial for the purpose of safe deployment.

It is worth noting that despite the abundant empirical defense strategies for GNNs~\cite{zhang2020gnnguard,entezari2020all,jin2019latent,jin2020graph,wu2019adversarial}, they are always subsequently defeated by novel attacking techniques~\cite{schuchardt2020collective,carlini2017adversarial}, and the defense over the fairness of GNNs also faces the same problem.
Therefore, an ideal way is to achieve certifiable defense on fairness (i.e., certified fairness defense).
A few recent works aim to certify the fairness for traditional deep learning models~\cite{khedr2022certifair,kang2022certifying,jin2022input,mangold2022differential,borca2022provable,ruoss2020learning}. 
Nevertheless, most of them require specially designed training strategies~\cite{khedr2022certifair,jin2022input,ruoss2020learning} and thus cannot be directly applied to optimized GNNs ready to be deployed.
More importantly, they mostly rely on assumptions on the optimization results~\cite{khedr2022certifair,jin2022input,borca2022provable,ruoss2020learning} or data distributions~\cite{kang2022certifying,mangold2022differential} over a continuous input space.
Hence they can hardly be generalized to GNNs due to the binary nature of the input graph topology.
Several other works propose certifiable GNN defense approaches to achieve theoretical guarantee~\cite{wang2021certified,bojchevski2019certifiable,bojchevski2020efficient,jin2020certified,zugner2019certifiable,zugner2020certifiable}.
% %
However, they mainly focus on securing the GNN prediction for a certain individual node to ensure model utility, while how to achieve fairness defense over the entire population is ignored.
Therefore, despite the significance, the study in this field still remains in its infancy.

In fact, achieving certifiable defense on the fairness of GNNs is a daunting task due to the following key challenges:
(1) \textbf{Generality:}
different types of GNNs could be designed and optimized for different real-world applications~\cite{zhou2020graph}. Correspondingly, our first challenge is to design a plug-and-play framework that can achieve certified defense on fairness for any optimized GNN models that are ready to be deployed. 
(2) \textbf{Vulnerability:} a plethora of existing studies have empirically verified that most GNNs are sensitive to input data perturbations~\cite{zhang2020gnnguard,zugner2020adversarial,xu2019topology}. In other words, small input perturbations may cause significant changes in the GNN output. Hence our second challenge is to properly mitigate the common vulnerabilities of GNNs without changing its structure or re-training. 
(3) \textbf{Multi-Modality:} the input data of GNNs naturally bears multiple modalities. For example, there are node attributes and graph topology in the widely studied attributed networks. In practice, both data modalities may be perturbed by malicious attackers. Therefore, our third challenge is to achieve certified defenses of fairness on both data modalities for GNNs.

As an early attempt to address the aforementioned challenges, in this paper, we propose a principled framework named ELEGANT (c\textbf{\underline{E}}tifiab\textbf{\underline{LE}} \textbf{\underline{G}}NNs over the f\textbf{\underline{A}}ir\textbf{\underline{N}}ess of Predic\textbf{\underline{T}}ions).
Specifically, we focus on the widely studied task of node classification and formulate a novel research problem of \textit{Certifying GNN Classifiers on Fairness}. 
To handle the first challenge, we propose to develop ELEGANT on top of an optimized GNN model without any assumptions over its structure or parameters.
Hence ELEGANT is able to serve as a plug-and-play framework for any optimized GNN model ready to be deployed.
To handle the second challenge, we propose to leverage randomized smoothing~\cite{wang2021certified,cohen2019certified} to defend against malicious attacks, where most GNNs can then be more robust over the prediction fairness level.
To handle the third challenge, we propose two different strategies working in a concurrent manner, such that certified defense against the attacks on both the node attributes (i.e., add and subtract attribute values) and graph topology (i.e., flip the existence of edges) can be realized.
Finally, we evaluate the effectiveness of ELEGANT on multiple real-world network datasets. In summary, our contributions are three-fold: 
% %
\begin{itemize}[topsep=5pt]
    \item \textbf{Problem Formulation.} We first formulate and perform an initial investigation on the novel research problem of \textit{Certifying GNN Classifiers on Fairness}.
    
    \item \textbf{Algorithm Design.} We propose a framework ELEGANT to achieve certified fairness defense against attacks on both node attributes and graph structure without relying on assumptions about any specific GNNs.
    
    \item \textbf{Experimental Evaluation.} We perform comprehensive experiments on real-world datasets under different GNN backbones to verify the effectiveness of ELEGANT.
\end{itemize}

\section{Problem Definition}

\noindent \textbf{Preliminaries.} 
Let $\mathcal{G} = \{\mathcal{V}, \mathcal{E}\}$ be an undirected attributed network, where $\mathcal{V} = \{v_1, ..., v_n\}$ is the set of $n$ nodes; $\mathcal{E} \subseteq \mathcal{V} \times \mathcal{V}$ is the set of edges.
Let $\bm{A} \in \{0,1\}^{n \times n}$ and $\bm{X} \in \mathbb{R}^{n \times d}$ be the adjacency matrix and attribute matrix of $\mathcal{G}$, respectively. 
Assume each node in $\mathcal{G}$ represents an individual, and sensitive attribute $s$ divides the population into different demographic subgroups. We follow a widely studied setting~\cite{agarwal2021towards,dai2021say} to assume the sensitive attribute is binary, i.e., $s \in \{0, 1\}$. We use $s_i$ to denote the value of the sensitive attribute for node $v_i$.
In node classification tasks, we use $\mathcal{V}_{\text{trn}}$ and $\mathcal{V}_{\text{tst}}$ ($\mathcal{V}_{\text{trn}}, \mathcal{V}_{\text{tst}} \in \mathcal{V}$) to represent the training and test node set, respectively.
We denote the GNN node classifier as $f_{\bm{\theta}}$ parameterized by $\bm{\theta}$. $f_{\bm{\theta}}$ takes $\bm{A}$ and $\bm{X}$ as input, and outputs $\hat{\bm{Y}}$ as the predictions for the nodes in $\mathcal{G}$. Each row in $\hat{\bm{Y}}$ is a one-hot vector flagging the predicted class.
Furthermore, we use $f_{\bm{\theta}^*}$ to denote a GNN equipped with the optimal parameter $\bm{\theta}^*$.

\noindent \textbf{Threat Model.} We focus on the attacking scenario of model evasion, i.e., the attack happens in the test phase. In particular, we assume that the victim model under attack is an optimized GNN node classifier $f_{\bm{\theta}^*}$. 
We follow a widely adopted setting~\cite{bojchevski2019certifiable,zugner2019certifiable,ma2020towards,mu2021hard} to assume that a subset of nodes $\mathcal{V}_{\text{vul}} \in \mathcal{V}_{\text{tst}}$ are vulnerable to attacks. 
Specifically, attackers may perturb their links (i.e., flip the edge existence) to other nodes and/or their node attributes (i.e., change their attribute values).
We denote the perturbations on adjacency matrix as $\bm{A} \oplus \bm{\Delta}_{\bm{A}}$. Here $\oplus$ denotes the element-wise XOR operator; $\bm{\Delta}_{\bm{A}} \in \{0,1\}^{n \times n}$ is the matrix representing the perturbations made by the attacker, where 1 only appears in rows and columns associated with the vulnerable nodes while 0 appears elsewhere.
Correspondingly, in $\bm{\Delta}_{\bm{A}}$, 1 entries represent edges that attackers intend to flip, while 0 entries are associated with edges that are not attacked.
Similarly, we denote the perturbations on node attribute matrix as $\bm{X} + \bm{\Delta}_{\bm{X}}$, where $\bm{\Delta}_{\bm{X}} \in \mathbb{R}^{n \times d}$ is the matrix representing the perturbations made by the attacker.
Usually, if the total magnitude of perturbations is within certain budgets (i.e., $\|\bm{\Delta}_{\bm{A}}\|_0 \leq \epsilon_{\bm{A}}$ for $\bm{A}$ and $\|\bm{\Delta}_{\bm{X}}\|_F \leq \epsilon_{\bm{X}}$ for $\bm{X}$), the perturbations are regarded as unnoticeable.
The goal of an attacker is to add unnoticeable perturbations to nodes in $\mathcal{V}_{\text{vul}}$, such that the GNN predictions for nodes in $\mathcal{V}_{\text{tst}}$ based on the perturbed graph exhibit as much bias as possible.
In addition, we assume that the attacker has access to any information about the victim GNN (i.e., a white-box setting). We note that such assumption aligns with the worst case in practice, which makes it even more challenging to achieve defense.

To defend against the aforementioned attacks, we aim to establish a node classifier on top of an optimized GNN backbone, such that this classifier, theoretically, will not exhibit more bias than a given threshold no matter what unnoticeable perturbations (i.e., perturbations within budgets) are added.
We formally formulate
the problem of \textit{Certifying GNN Classifiers on Fairness} below.
\begin{problem}
\label{p1}
\textbf{Certifying GNN Classifiers on Fairness.} Given an attributed network $\mathcal{G}$, a test node set $\mathcal{V}_{\text{tst}}$, a vulnerable node set $\mathcal{V}_{\text{vul}} \in \mathcal{V}_{\text{tst}}$, a threshold $\eta$ for the exhibited bias, and an optimized GNN classifier $f_{\bm{\theta}^*}$, our goal is to achieve a classifier on top of $f_{\bm{\theta}^*}$ associated with budgets $\epsilon_{\bm{A}}$ and $\epsilon_{\bm{X}}$, such that this classifier will bear comparable utility with $f_{\bm{\theta}^*}$ but provably not exhibit more bias than $\eta$ on the nodes in $\mathcal{V}_{\text{tst}}$, no matter what unnoticeable node attributes and/or graph structure perturbations (i.e., perturbations within budgets) are made over the nodes in $\mathcal{V}_{\text{vul}}$.
\end{problem}

\section{Methodology}

Here we first introduce the modeling of attack and defense on the fairness of GNNs, then discuss how we achieve certified defense on node attributes. After that, we propose a strategy to achieve both types of certified defense (i.e., defense on node attributes and graph structure) at the same time.
Finally, we introduce strategies to achieve the designed certified fairness defense for GNNs in practice.

\subsection{Bias Indicator Function}

We first construct an indicator $g$ to mathematically model the attack and defense on the fairness of GNNs. Our rationale is to use $g$ to indicate whether the predictions of $f_{\bm{\theta}^*}$ exhibit a level of bias exceeding a given threshold.
We present the formal definition below.
% \newcommand{\ind}{\mathbf{1}}
% 用法：\ind(\cdot)
\begin{definition}
\label{indicator_func}
(Bias Indicator Function) Given adjacency matrix $\bm{A}$ and node attribute matrix $\bm{X}$, a test node set $\mathcal{V}_{\text{tst}}$, a threshold $\eta$ for the exhibited bias, and an optimized GNN model $f_{\bm{\theta}^*}$, the bias indicator function is defined as $g(f_{\bm{\theta}^*}, \bm{A}, \bm{X}, \eta, \mathcal{V}_{\text{tst}}) = \mathbf{1}\left( \pi(f_{\bm{\theta}^*}(\bm{A}, \bm{X}), \mathcal{V}_{\text{tst}}) < \eta \right)$,
where $\mathbf{1}(\cdot)$ takes an event as input and outputs 1 if the event happens (otherwise 0); $\pi(\cdot, \cdot)$ denotes any bias metric for GNN predictions (taken as its first parameter) over a set of nodes (taken as its second parameter). Traditional metrics measuring the exhibited algorithmic bias include $\Delta_{\text{SP}}$~\cite{dai2021say,DBLP:conf/innovations/DworkHPRZ12} and $\Delta_{\text{EO}}$~\cite{dai2021say,DBLP:conf/nips/HardtPNS16}.
\end{definition}
Correspondingly, the goal of the attacker is to ensure that the indicator $g$ outputs 0 for an $\eta$ as large as possible, while the goal of certified defense is to ensure for a given threshold $\eta$, the indicator $g$ provably yields 1 as long as the attacks are within certain budgets.
Note that a reasonable $\eta$ should ensure that $g$ outputs 1 based on the clean graph data (i.e., graph data without any attacks).
Below we first discuss the certified fairness defense over node attributes to maintain the output of $g$ as 1.

\subsection{Certified Defense over Node Attributes}

\label{noise1}

We now introduce how we achieve certified defense over the node attributes for the fairness of the predictions yielded by $f_{\bm{\theta}^*}$.
Specifically, we propose to construct a smoothed bias indicator function $\tilde{g}_{\bm{X}}(f_{\bm{\theta}^*}, \bm{A}, \bm{X}, \mathcal{V}_{\text{vul}}, \eta)$ via adding Gaussian noise over the node attributes of vulnerable nodes in $\mathcal{V}_{\text{vul}}$.
For simplicity, we use $\tilde{g}_{\bm{X}}(\bm{A}, \bm{X})$ to represent the smoothed bias indicator function over node attributes by omitting $\mathcal{V}_{\text{vul}}$, $f_{\bm{\theta}^*}$ and $\eta$.
We formally define $\tilde{g}_{\bm{X}}$ below.
\begin{definition}
\label{x_smooth}
(Bias Indicator with Node Attribute Smoothing) We define the bias indicator with smoothed node attributes over the nodes in $\mathcal{V}_{\text{vul}}$ as $\tilde{g}_{\bm{X}}(\bm{A}, \bm{X}) = \mathrm{argmax}_{c \in \{0, 1\}}\mathrm{Pr}( g(f_{\bm{\theta}^*}, \bm{A}, \bm{X} + \gamma_{\bm{X}}(\bm{\omega}_{\bm{X}}, \mathcal{V}_{\text{vul}}), \eta, \mathcal{V}_{\text{tst}}) = c)$.
Here $\bm{\omega}_{\bm{X}}$ is a $(d \cdot |\mathcal{V}_{\text{vul}}|)$-dimensional vector, where each entry is a random variable following a Gaussian Distribution $\mathcal{N}(0, \sigma^2)$; $\gamma_{\bm{X}}(\cdot, \cdot)$ maps a vector (its first parameter) to an $(n \times d)$-dimensional matrix, where the vector values are assigned to rows with the indices indicated by a set of nodes (its second parameter) while other entries are zeros.
\end{definition}

In the discussion below, we denote $\bm{\Gamma}_{\bm{X}} = \gamma_{\bm{X}}(\bm{\omega}_{\bm{X}}, \mathcal{V}_{\text{vul}})$ and $ g(\bm{A}, \bm{X} + \bm{\Gamma}_{\bm{X}}) = g(f_{\bm{\theta}^*}, \bm{A}, \bm{X} + \gamma_{\bm{X}}(\bm{\omega}_{\bm{X}}, \mathcal{V}_{\text{vul}}), \eta, \mathcal{V}_{\text{tst}})$ for simplicity.
Based on Definition~\ref{x_smooth}, we are now able to derive the theoretical certification for the defense on fairness with the defined $\tilde{g}_{\bm{X}}$ in~\cref{x_smooth}. Below we present a certification for the defense of fairness directly building upon the certification given in~\cite{cohen2019certified}. 
\begin{theorem}
\label{thm:1}
(Certified Fairness Defense for Node Attributes~\cite{cohen2019certified}) 
Denote the probability for $g(\bm{A}, \bm{X} + \bm{\Gamma}_{\bm{X}})$ to return class $c$ ($c \in \{0, 1\}$) as $P(c)$. 
Then $\tilde{g}_{\bm{X}}(\bm{A}, \bm{X})$ will provably return $\mathrm{argmax}_{c \in \{0, 1\}} P(c)$ for any perturbations (over the attributes of vulnerable nodes) within an $l_2$ radius $\tilde{\epsilon_{\bm{X}}} = \frac{\sigma}{2} \left(\Phi^{-1}(\max_{c \in \{0, 1\}} P(c)) - \Phi^{-1}(\min_{c \in \{0, 1\}} P(c)) \right)$, where $\Phi^{-1}(\cdot)$ is the inverse of the standard Gaussian cumulative distribution function. 
\end{theorem}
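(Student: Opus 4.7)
The plan is to reduce the certification to the classical Gaussian randomized smoothing result of Cohen et al. (2019), observing that the bias indicator $g(\bm{A}, \bm{X} + \bm{\Gamma}_{\bm{X}})$ behaves exactly like a binary base classifier acted on by an isotropic Gaussian $\mathcal{N}(\bm{0}, \sigma^2 I)$ on the coordinates corresponding to attributes of vulnerable nodes. Since $\bm{A}$ is held fixed throughout this theorem and the noise/perturbation live only in the $(d\cdot|\mathcal{V}_{\text{vul}}|)$-dimensional subspace indexed by $\mathcal{V}_{\text{vul}}$, I would first restrict attention to this subspace via the embedding $\gamma_{\bm{X}}(\cdot, \mathcal{V}_{\text{vul}})$, so that $\bm{\Gamma}_{\bm{X}}$ becomes a standard isotropic Gaussian of variance $\sigma^2$ in this reduced space and any admissible attribute perturbation $\bm{\Delta}_{\bm{X}}$ corresponds to a vector $\bm{\delta}$ with $\|\bm{\delta}\|_2 \le \tilde\epsilon_{\bm{X}}$.

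Next I would set $c_A = \mathrm{argmax}_{c\in\{0,1\}} P(c)$ with $p_A = P(c_A)$ and $c_B$ the remaining class with $p_B = P(c_B)$, and note that because $g\in\{0,1\}$ we automatically have $p_A + p_B = 1$, so $\Phi^{-1}(p_A) = -\Phi^{-1}(p_B)$ and the stated radius simplifies to $\sigma \Phi^{-1}(p_A)$. The core step is then to show that for any shift $\bm{\delta}$ with $\|\bm{\delta}\|_2 \le \sigma \Phi^{-1}(p_A)$,
\begin{equation*}
\mathrm{Pr}\!\left(g(\bm{A}, \bm{X} + \bm{\delta} + \bm{\Gamma}_{\bm{X}}) = c_A\right) \;>\; \mathrm{Pr}\!\left(g(\bm{A}, \bm{X} + \bm{\delta} + \bm{\Gamma}_{\bm{X}}) = c_B\right),
\end{equation*}
which by definition of $\tilde g_{\bm{X}}$ forces $\tilde g_{\bm{X}}(\bm{A}, \bm{X} + \bm{\delta}) = c_A$.

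To prove this inequality I would invoke the Neyman--Pearson lemma to identify the worst-case measurable set $S = \{z : g(\bm{A}, \bm{X} + z) = c_A\}$ compatible with $\mathrm{Pr}_{z\sim\mathcal{N}(\bm{0},\sigma^2 I)}(z\in S) = p_A$: among all such sets, the one that minimizes the shifted probability $\mathrm{Pr}_{z\sim\mathcal{N}(\bm{\delta},\sigma^2 I)}(z\in S)$ is the half-space $H = \{z : \langle \bm{\delta}, z\rangle \le \sigma\|\bm{\delta}\|_2 \Phi^{-1}(p_A)\}$, since the likelihood ratio between the two Gaussians is monotone in $\langle \bm{\delta}, z\rangle$. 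A direct computation of the one-dimensional Gaussian probability of this half-space under the shifted distribution gives $\Phi(\Phi^{-1}(p_A) - \|\bm{\delta}\|_2/\sigma)$, which is strictly larger than $1/2$ precisely when $\|\bm{\delta}\|_2 < \sigma\Phi^{-1}(p_A)$. The symmetric argument with $c_B$ and $p_B$ yields an upper bound strictly less than $1/2$, so together they establish the desired strict inequality throughout the ball of radius $\tilde\epsilon_{\bm{X}}$.

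The main obstacle I anticipate is the Neyman--Pearson half-space step: technically it requires that the event $\{g = c_A\}$ is measurable and that no mass sits exactly on the boundary, which is automatic here because $g$ is a deterministic $\{0,1\}$-valued function of a Gaussian input and the boundary hyperplane has Lebesgue measure zero. Everything else is essentially bookkeeping: restricting the ambient space via $\gamma_{\bm{X}}$, using the binary-class simplification $p_A + p_B = 1$ to collapse the stated radius formula to $\sigma\Phi^{-1}(p_A)$, and translating the strict inequality of probabilities back into the $\mathrm{argmax}$ definition of $\tilde g_{\bm{X}}$ to conclude certification.
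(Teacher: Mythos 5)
Your proposal is correct, but it follows a different route than the paper. You invoke the Neyman--Pearson lemma directly at the perturbed point: the worst-case decision region with clean-noise mass $p_A$ is a half-space $\{z : \langle \bm{\delta}, z\rangle \le \sigma\|\bm{\delta}\|_2\Phi^{-1}(p_A)\}$, whose mass under the shifted Gaussian is $\Phi\!\left(\Phi^{-1}(p_A) - \|\bm{\delta}\|_2/\sigma\right) > 1/2$ inside the stated radius; this is the Cohen-et-al.-style argument, and your reduction to the vulnerable-node subspace, the measurability remarks, and the collapse of the radius to $\sigma\Phi^{-1}(p_A)$ via $p_A + p_B = 1$ are all sound for the theorem as stated with exact probabilities. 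The paper instead proves (Lemmas A--A3 in the appendix) that $\Phi^{-1}(h_c(\bm{x}))$ is $\tfrac{1}{\sigma}$-Lipschitz, by bounding $\|\nabla_{\bm{x}} h_c\|_2$ through a worst-case-classifier optimization whose extremizer is exactly the half-space indicator your argument produces, and then derives the radius by applying the Lipschitz bound to both classes and comparing $\Phi^{-1}(h_1)$ with $\Phi^{-1}(h_0)$. The two arguments hinge on the same extremal half-space structure, but yours is more direct and avoids any differentiation of $h_c$, while the paper's Lipschitz formulation carries over verbatim when the exact $P(c)$ are replaced by Monte Carlo bounds $\underline{p_{\text{max}}} \ge \overline{p_{\text{min}}}$ that need not sum to one --- which is precisely why the theorem keeps the two-term radius $\frac{\sigma}{2}\left(\Phi^{-1}(\max_c P(c)) - \Phi^{-1}(\min_c P(c))\right)$ rather than the simplified $\sigma\Phi^{-1}(p_A)$ your binary collapse yields; if you want your version to cover the estimated-probability setting of Section 3.4 you would need to retain the general two-sided form (as Cohen et al.\ do) rather than assume exact complementarity. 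One further negligible point, shared with the paper's own proof: at $\|\bm{\delta}\|_2$ exactly equal to the radius the inequality becomes non-strict, so certification on the closed ball relies on a tie-breaking convention in the $\mathrm{argmax}$.
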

Correspondingly, for an $\eta$ that enables $\max_{c \in \{0, 1\}} P(c) = 1$, it is then safe to say that no matter what perturbations $\bm{\Delta}_{\bm{X}}$ are made on vulnerable nodes, as long as $\|\bm{\Delta}_{\bm{X}}\|_F \leq \tilde{\epsilon}_{\bm{X}}$, the constructed $\tilde{g}_{\bm{X}}$ will provably not yield predictions for $\mathcal{V}_{\text{tst}}$ with a level of bias exceeding $\eta$.
Nevertheless, it is worth noting that, in GNNs, perturbations may also be made on the structure of the vulnerable nodes, i.e., adding and/or deleting edges between these vulnerable nodes and any nodes in the graph. Hence it is also necessary to achieve certified defense against such structural attacks. Here we propose to also smooth the constructed $\tilde{g}_{\bm{X}}$ over the graph structure (of the vulnerable nodes) for the purpose of certified fairness defense on the graph structure. However, the adjacency matrix describing the graph structure is naturally binary, and thus should be smoothed in a different way. We elaborate on the joint certification over node attributes and graph structure in the next subsection.

\subsection{Certified Defense over Node Attributes and Graph Structure}

\label{noise2}

We then introduce achieving certified fairness defense against attacks on both node attributes and graph structure.
We propose a strategy to leverage noise following Bernoulli distribution to smooth $\tilde{g}_{\bm{X}}$ over the rows and columns (due to symmetricity) associated with the vulnerable nodes in $\bm{A}$. 
In this way, we can smooth both the node attributes and graph structure for $g$ in a randomized manner, and we denote the constructed function as $\tilde{g}_{\bm{A},\bm{X}}$.
We present the formal definition below.
\begin{definition}
\label{a_smooth}  
(Bias Indicator with Attribute-Structure Smoothing) We define the bias indicator function with smoothed node attributes and graph structure over the nodes in $\mathcal{V}_{\text{vul}}$ as $\tilde{g}_{\bm{A},\bm{X}}(\bm{A}, \bm{X}) = \mathrm{argmax}_{c \in \{0, 1\}} \mathrm{Pr}( \tilde{g}_{\bm{X}}(\bm{A} \oplus \gamma_{\bm{A}}(\bm{\omega}_{\bm{A}}, \mathcal{V}_{\text{vul}}), \bm{X}) = c)$.
Here $\bm{\omega}_{\bm{A}}$ is an $(n \cdot |\mathcal{V}_{\text{vul}}|)$-dimensional random variable, where each dimension takes 0 and 1 with the probability of $\beta$ ($0.5 < \beta \leq 1$) and $1 - \beta$, respectively; function $\gamma_{\bm{A}}(\cdot, \cdot)$ maps a vector (its first parameter) to a symmetric $(n \times n)$-dimensional matrix, where the vector values are assigned to rows whose indices associated with the indices of a set of nodes (its second parameter) and then mirrored to the corresponding columns, while other values are left as zeros.
\end{definition}

\begin{figure}[t]
    \centering
\includegraphics[width=0.8\linewidth]{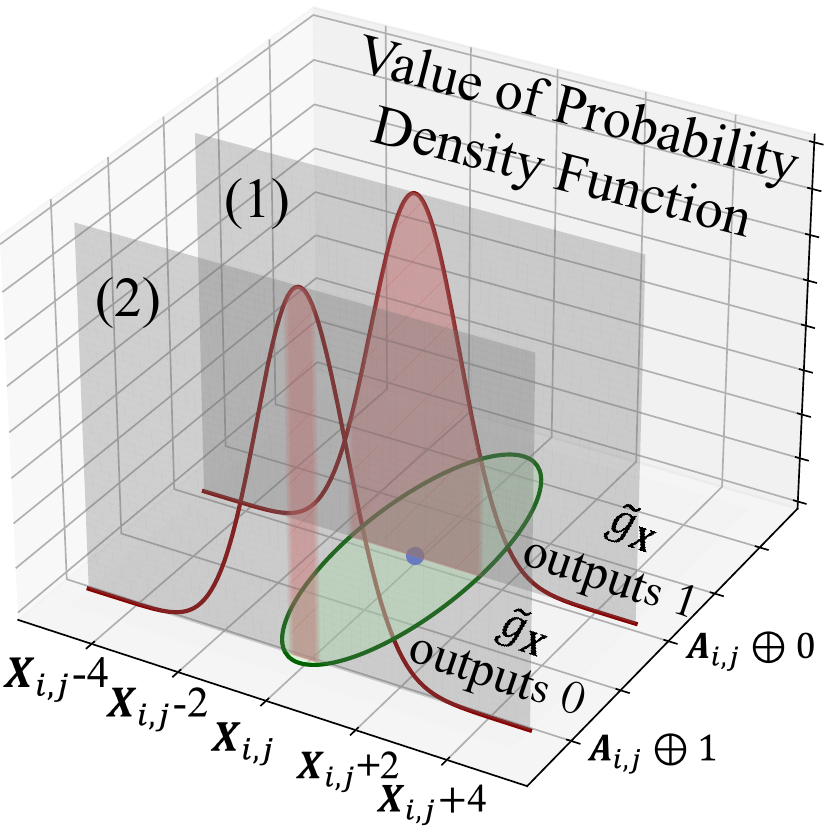}
    \caption{An example of how ELEGANT functions in the input space spanned by node attributes and graph structure.}  
    \label{dual_certification} 
\end{figure}

We let $\bm{\Gamma}_{\bm{A}} = \gamma_{\bm{A}}(\bm{\omega}_{\bm{A}}, \mathcal{V}_{\text{vul}})$ below for simplicity.
To better illustrate how classifier $\tilde{g}_{\bm{A},\bm{X}}$ achieves certified fairness defense over both data modalities of an attributed network, we provide an exemplary case in~\cref{dual_certification}.
Here we assume node $v_i \in \mathcal{V}_{\text{vul}}$. Considering the high dimensionality of node attributes and adjacency matrix, we only analyze two entries $\bm{X}_{i,j}$ and $\bm{A}_{i,j}$ and omit other entries after noise for simplicity. Here the superscript $(i,j)$ represents the $i$-th row and $j$-th column of a matrix.
Under binary noise, entry $\bm{A}_{i,j}$ only has two possible values, i.e., $\bm{A}_{i,j} \oplus 0$ and $\bm{A}_{i,j} \oplus 1$. We denote the two cases as Case (1) and Case (2), respectively.
We assume that the area where $g$ returns 1 in the span of the two input random entries of $g$ (i.e., $\bm{X}_{i,j}$ and $\bm{A}_{i,j}$ under random noise) is an ellipse (marked out with green), where the decision boundary is marked out with deep green.
In Case (1), $\bm{X}_{i,j}$ under random noise follows a Gaussian distribution, whose probability density function is marked out as deep red. 
% %
We assume that, in this case, the integral of the probability density function within the range of the ellipse (marked out with shallow red) is larger than 0.5. Correspondingly, according to~\cref{x_smooth}, $\tilde{g}_{\bm{X}}$ returns 1 in this case.
In Case (2), we similarly mark out the probability density function and the area used for integral within the range of the ellipse. We assume that in this case, the integral is smaller than 0.5, and thus $\tilde{g}_{\bm{X}}$ returns 0.
Note that to compute the output of $\tilde{g}_{\bm{A},\bm{X}}$, we need to identify the output of $\tilde{g}_{\bm{X}}$ with the largest probability. Notice that $\beta > 0.5$, we have that $\bm{A}_{i,j} \oplus 0$ happens with a larger probability than $\bm{A}_{i,j} \oplus 1$.
Therefore, $\tilde{g}_{\bm{A},\bm{X}}$ outputs 1 in this example. In other words, the bias level of predictions of $f_{\bm{\theta}^*}$ is satisfying (i.e., smaller than $\eta$) according to the definition of $\tilde{g}_{\bm{A},\bm{X}}$.

Accordingly, to achieve certified fairness defense, our intuition here is to find tractable budgets $\epsilon_{\bm{A}}$ and $\epsilon_{\bm{X}}$, such that for any perturbations made over the node attributes and graph structure of the vulnerable nodes within $\epsilon_{\bm{A}}$ and $\epsilon_{\bm{X}}$, $\tilde{g}_{\bm{A},\bm{X}}$ provably maintains the same classification results in the way shown above. Below we introduce the certified fairness defense budgets over the graph structure $\epsilon_{\bm{A}}$ and node attributes $\epsilon_{\bm{X}}$ for $\tilde{g}_{\bm{A},\bm{X}}$.

\begin{theorem}
\label{a_r}
(Certified Defense Budgets for Structure \& Attribute Perturbations) 
The certified defense budget over the graph structure $\epsilon_{\bm{A}}$ for $\tilde{g}_{\bm{A},\bm{X}}$ is given as
\begin{align}
\label{solving_k}
    \epsilon_{\bm{A}} = \max \tilde{\epsilon_{\bm{A}}} , \; \text{s.t.} \; \underline{P_{\tilde{g}_{\bm{X}}=1}} > 0.5, \; \forall \;\|\bm{\Delta}_{\bm{A}}\|_{0} \leq \tilde{\epsilon_{\bm{A}}}
\end{align}
following the formulation given in~\cite{lee2019tight,bojchevski2020efficient,wang2021certified}. Here $\underline{P_{\tilde{g}_{\bm{X}}=1}}$ is the lower bound of $\mathrm{Pr}(\tilde{g}_{\bm{X}}(\bm{A} \oplus \bm{\Delta}_{\bm{A}} \oplus \bm{\Gamma}_{\bm{A}}, \bm{X})=1)$.

The certified defense budget $\epsilon_{\bm{X}}$ for $\tilde{g}_{\bm{A},\bm{X}}$ is given as
\begin{align}
\label{x_r}
    \epsilon_{\bm{X}} = \min \bigl \{\tilde{\epsilon_{\bm{X}}}: \tilde{\epsilon_{\bm{X}}} = \zeta \left( \tilde{g}_{\bm{X}}(\bm{A} \oplus \bm{\Gamma}_{\bm{A}}, \bm{X}) \right) \text{ , where} \; \bm{\Gamma}_{\bm{A}} \in \bar{\mathcal{A}} \bigl \}.
\end{align}
Here $\bar{\mathcal{A}}$ is the set of all possible $(n \times n)$-matrices, where entries in rows whose indices associate with those vulnerable nodes may take 1 or 0, while other entries are zeros. Given an $\bm{\Gamma}_{\bm{A}} \in \bar{\mathcal{A}}$, $\zeta(\cdot)$ outputs the perturbation budget over node attribute $\bm{X}$ as given in Theorem~\ref{thm:1}.
\end{theorem}

Here, under structure attacks, we follow the rationale of most works achieving certification in discrete space~\cite{lee2019tight,bojchevski2020efficient,wang2021certified}: the probability of $\tilde{g}_{\bm{X}}$ will always be 1 (i.e., certification succeeds) if $\underline{P_{\tilde{g}_{\bm{X}}=1}} > 0.5$.
Meanwhile, taking the smallest $\epsilon_{\bm{X}}$ over all possible $\tilde{\epsilon_{\bm{X}}}$ will ensure the certification successful in the space of node attributes, since perturbations over node attributes  smaller than $\epsilon_{\bm{X}}$ will not change the output of $\tilde{g}_{\bm{X}}(\bm{A} \oplus \bm{\Gamma}_{\bm{A}}, \bm{X})$ given any $\bm{\Gamma}_{\bm{A}} \in \bar{\mathcal{A}}$. We provide detailed proofs in Appendix in the online version.

\subsection{Certification in Practice}
\label{certification_practice}

We now introduce how to bridge the gaps between theoretical design and practical feasibility. 
We show the algorithmic routine of the proposed ELEGANT in Algorithm~\ref{elegant}.

\noindent \textbf{Estimating the Predicted Label Probabilities.}
According to~\cref{a_smooth}, it is necessary to obtain $\mathrm{Pr}(\tilde{g}_{\bm{X}}(\bm{A} \oplus \bm{\Gamma}_{\bm{A}}, \bm{X}) = c)$ ($c \in \{0, 1\}$) to determine the output of classifier $\tilde{g}_{\bm{X}}$.
We propose to leverage a Monte Carlo method to estimate such a probability. Specifically, we first randomly pick $N$ samples of $\bm{\Gamma}_{\bm{A}}$ as $\bar{\mathcal{A}}'$ ($\bar{\mathcal{A}}' \subset \bar{\mathcal{A}}$). Considering the output of $\tilde{g}_{\bm{X}}$ is binary, we then follow a common strategy~\cite{cohen2019certified} to consider this problem as a parameter estimation of a Binomial distribution:
we first count the number of returned label 1 and 0 under noise as $N_1$ and $N_0$ ($N_1 + N_0 = N$); then we choose a confidence level $1- \alpha$ and take the $\alpha$-th quantile of the beta distribution with parameters $N_1$ and $N_0$ as the estimated probability lower bound for returning label $c = 1$. We proved that all analysis remains true for such an estimation in Appendix.
Similar strategy is used to estimate the probability lower bound of yielding 1 for $g(\bm{A}, \bm{X} + \bm{\Gamma}_{\bm{X}})$.

  \begin{algorithm}[t]
    \small
    \vspace{1mm}
    \caption{Certified Defense on the Fairness of GNNs} \label{elegant}
    \begin{algorithmic}[1]
      \REQUIRE ~~\\ 
$\mathcal{G}$: graph data with potential malicious attacks; 
$f_{\bm{\theta}^*}$: an optimized GNN node classifier; 
$\mathcal{V}_{\text{train}}$, $\mathcal{V}_{\text{validation}}$, $\mathcal{V}_{\text{test}} \in \mathcal{V}$: the node set for training, validation, and test, respectively;
$\mathcal{V}_{\text{vul}} \in \mathcal{V}_{\text{test}}$: the set of vulnerable nodes that may bear attacks (on node attributes and/or graph topology);
$N_1$, $N_2$: sample size for the set of Bernoulli and Gaussian noise, respectively;
$\eta$: a given threshold for the exhibited bias;
$\alpha$: the parameter to indicate the confidence level ($1 - \alpha$) of the estimation;
$\sigma$: the std of the added Gaussian noise;
$\beta$: the probability of returning zero of the added Bernoulli noise;
\ENSURE ~~\\
$\epsilon_{\bm{A}}$: the certified defense budget over the adjacency matrix $\bm{A}$; 
$\epsilon_{\bm{X}}$: the certified defense budget over the node attribute matrix $\bm{X}$; 
$\hat{\bm{Y}}$: the output node classification results from the certified classifier; \\
\STATE Sample a set of Bernoulli noise $\mathcal{Q}_{\text{B}}$ containing $N_1$ samples;
\STATE Sample a set of Gaussian noise $\mathcal{Q}_{\text{G}}$ containing $N_2$ samples;
\FOR{$\omega_{\bm{A}}$ $\in$ $\mathcal{Q}_{\text{B}}$}
\FOR{$\omega_{\bm{X}}$ $\in$ $\mathcal{Q}_{\text{G}}$}
\STATE Calculate and collect the output of $f_{\bm{\theta}^*}$ under $\omega_{\bm{A}}$ and $\omega_{\bm{X}}$;
\STATE Calculate and collect the output of $g$ based on the output of $f_{\bm{\theta}^*}$;
\ENDFOR
\STATE Under $\mathcal{Q}_{\text{G}}$, collect the number of $g$ returning 1 and 0 as $n_1$ and $n_0$, respectively;
\STATE Estimate the lower bound of returning $c$ as \underline{$P_{g=c}$} determined by the larger one between $n_1$ and $n_0$;
  \IF{$n_1 > n_0$ and \underline{$P_{g=1}$} is larger than 0.5 with a confidence level larger than $1 - \alpha$ \textbf{or} $n_1 < n_0$ and \underline{$P_{g=0}$} is larger than 0.5 with a confidence level larger than $1 - \alpha$}
    \STATE Calculate and collect the value of $\tilde{\epsilon}_{\bm{X}}$;
  \ELSE
    \STATE \textbf{return} ABSTAIN
  \ENDIF
\ENDFOR
\STATE Collect the number of cases where $n_1 > n_0$ and estimate the lower bound of returning 1 as \underline{$P_{\tilde{g}_{\bm{X}}=1}$};
\IF{\underline{$P_{\tilde{g}_{\bm{X}}=1}$} is larger than 0.5 with a confidence level larger than $1 - \alpha$}
\STATE Calculate $\epsilon_{\bm{X}}$ (out of all $\tilde{\epsilon}_{\bm{X}}$) and $\epsilon_{\bm{A}}$ (based on the estimated \underline{$P_{\tilde{g}_{\bm{X}}=1}$}); 
\STATE Find $\hat{\bm{Y}}$ out of the collected output of $f_{\bm{\theta}^*}$;
\STATE \textbf{return} $\hat{\bm{Y}}$, $\epsilon_{\bm{X}}$, and $\epsilon_{\bm{A}}$;
\ELSE
\STATE \textbf{return} ABSTAIN
\ENDIF
    \end{algorithmic}
  \end{algorithm}

\noindent \textbf{Obtaining Fair Classification Results.} After achieving certified fairness defense based on $\tilde{g}_{\bm{A}, \bm{X}}$, we also need to obtain the corresponding node classification results (given by $f_{\bm{\theta}^*}$) over $\mathcal{V}_{\text{tst}}$. We propose to collect all classification results associated with the sampled $\bm{\Gamma}_{\bm{A}}' \in \bar{\mathcal{A}}'$ that leads to an estimated lower bound of $\mathrm{Pr}(\tilde{g}_{\bm{X}}(\bm{A} \oplus \bm{\Gamma}_{\bm{A}}', \bm{X}) = 1)$ to be larger than 0.5 as $\hat{\mathcal{Y}}'$.
Here $\hat{\mathcal{Y}}'$ is a set of output matrices of $f_{\bm{\theta}^*}$, where each matrix consists of the one-hot output classification results (as each row in the matrix) for all nodes.
We propose to take $\mathrm{argmin}_{\hat{\bm{Y}}'} \pi(\hat{\bm{Y}}', \mathcal{V}_{\text{tst}}), s.t. \; \hat{\bm{Y}}' \in \hat{\mathcal{Y}}'$ as the final node classification results.
Consider $\mathrm{Pr}(\tilde{g}_{\bm{X}}(\bm{A} \oplus \bm{\Gamma}_{\bm{A}}', \bm{X}) = 1)$ falls into the confidence interval characterized by $1- \alpha$, we have the probabilistic theoretical guarantee below.
\begin{proposition}
\label{fair_predictions}
(Probabilistic Guarantee for the Fairness Level of Node Classification).
For $\hat{\bm{Y}} = \mathrm{argmin}_{\hat{\bm{Y}}'} \pi(\hat{\bm{Y}}', \mathcal{V}_{\text{tst}}), s.t. \; \hat{\bm{Y}}' \in \hat{\mathcal{Y}}'$, we have $\mathrm{Pr}(\pi(\hat{\bm{Y}}, \mathcal{V}_{\text{tst}}) > \eta) < 0.5^{|\hat{\mathcal{Y}}'|}$.
\end{proposition}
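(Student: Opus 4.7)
The plan is to reduce the tail bound on $\pi(\hat{\bm{Y}}, \mathcal{V}_{\text{tst}})$ to a product of independent per-sample Bernoulli-type bounds, exploiting that $\hat{\bm{Y}}$ is defined as an $\mathrm{argmin}$ over $\hat{\mathcal{Y}}'$. First I would observe that a minimum over a finite set exceeds $\eta$ if and only if every member of the set does, so
\begin{equation*}
\{\pi(\hat{\bm{Y}}, \mathcal{V}_{\text{tst}}) > \eta\} \;=\; \bigcap_{\hat{\bm{Y}}'_i \in \hat{\mathcal{Y}}'} \{\pi(\hat{\bm{Y}}'_i, \mathcal{V}_{\text{tst}}) > \eta\}.
\end{equation*}
The task then becomes bounding the probability of this intersection from above by $0.5^{|\hat{\mathcal{Y}}'|}$.

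For the per-sample bound I would use the selection rule that produced $\hat{\mathcal{Y}}'$: an element $\hat{\bm{Y}}'_i$ is retained precisely because the estimated lower bound of $\mathrm{Pr}(\tilde{g}_{\bm{X}}(\bm{A} \oplus \bm{\Gamma}_{\bm{A}}^{(i)}, \bm{X}) = 1)$ exceeds $0.5$. Unfolding \cref{indicator_func,x_smooth}, the label $c=1$ corresponds exactly to the event $\pi(\cdot, \mathcal{V}_{\text{tst}}) < \eta$, so the complementary event $\pi(\hat{\bm{Y}}'_i, \mathcal{V}_{\text{tst}}) > \eta$ has probability strictly below $0.5$ for every retained sample.

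Next I would invoke independence: the Monte Carlo draws $\bm{\Gamma}_{\bm{A}}^{(i)}$ (together with the fresh attribute noise used inside each realization) are sampled i.i.d. from their respective Bernoulli and Gaussian distributions, so the events $\{\pi(\hat{\bm{Y}}'_i, \mathcal{V}_{\text{tst}}) > \eta\}$ are mutually independent across $i$. Multiplying the per-sample bounds gives the product $0.5^{|\hat{\mathcal{Y}}'|}$, and combining with the argmin reduction closes the argument.

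The hardest part will be bridging the two layers of randomness cleanly: the selection criterion in \cref{certification_practice} controls an \emph{estimated} lower bound at confidence $1-\alpha$, whereas the per-sample step needs a direct bound on $\mathrm{Pr}(\pi(\hat{\bm{Y}}'_i, \mathcal{V}_{\text{tst}}) > \eta) < 0.5$ for a single realization. I would handle this either by absorbing the $\alpha$ slack into the final probabilistic statement or by appealing to the appendix guarantee that the certified lower bound may be treated as a valid bound on the underlying probability. A secondary subtlety is to verify that the independence in Step~3 really holds with the sampling scheme actually used --- if any attribute noise is reused across different $\bm{\Gamma}_{\bm{A}}^{(i)}$, one must draw fresh noise per sample to restore the factorization that yields $0.5^{|\hat{\mathcal{Y}}'|}$.
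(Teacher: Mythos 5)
Your proposal follows essentially the same route as the paper's own proof: the argmin exceeds $\eta$ only if every retained sample does, each retained sample exceeds $\eta$ with probability below $0.5$, and treating the Monte Carlo samples as independent turns the intersection into the product bound $0.5^{|\hat{\mathcal{Y}}'|}$. The two subtleties you flag (estimated versus exact lower bound at confidence $1-\alpha$, and freshness of the noise across samples) are precisely what the paper addresses by asserting the outputs are i.i.d. and by its appendix remark that the estimated lower bound can stand in for the true probability, so your plan matches the paper's argument rather than departing from it.
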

Note that for a large enough sample size $N$, the cardinality of $\hat{\mathcal{Y}}'$ also tends to be large in practice. Hence it is safe to argue that $\mathrm{Pr}(\pi(\hat{\bm{Y}}, \mathcal{V}_{\text{tst}}) > \eta)$ tends to be small enough. In other words, we have a probability that is large enough to obtain results with a bias level lower than threshold $\eta$.

\noindent \textbf{Calculation of Perturbation Budgets.} 
We calculate $\epsilon_{\bm{A}}$ with the algorithm proposed in~\cite{wang2021certified}. 
For $\epsilon_{\bm{X}}$, we utilize a Monte Carlo procedure to empirically verify the certified defense budget over node attributes. 
Specifically, due to the intractability of enumerating all possible structure perturbations, we randomly sample a finite set $\bar{\mathcal{A}}'$ of $\bm{\Gamma}_{\bm{A}}$ and evaluate the certification condition under each sampled case. 
If any sampled $\bm{\Gamma}_{\bm{A}}'$ violates the certification condition, the procedure immediately abstains. 
Otherwise, we estimate the value of $\epsilon_{\bm{X}}$ with
$\min\{\tilde{\epsilon}_{\bm{X}}: \tilde{\epsilon}_{\bm{X}} \text{ is derived with classifier } \tilde{g}_{\bm{X}}(\bm{A} \oplus \bm{\Gamma}_{\bm{A}}', \bm{X}), \bm{\Gamma}_{\bm{A}}' \in \bar{\mathcal{A}}'\}$.

\section{Experimental Evaluations}

In this section, we aim to answer three research questions: \textbf{RQ1}: How well does ELEGANT perform in achieving certified fairness defense? 
\textbf{RQ2}: How does ELEGANT perform under fairness attacks compared to other popular fairness-aware GNNs?
\textbf{RQ3}: How does ELEGANT perform under different settings of parameters?
We present the main experimental settings and representative results in this section due to space limits. Detailed settings and supplementary experiments are in Appendix of the online version.

\subsection{Experimental Settings}
\label{settings}

\noindent \textbf{Downstream Task and Datasets.}
We focus on the widely studied node classification task, which is one of the most representative tasks in the domain of learning on graphs.
We adopt three real-world network datasets that are widely used to perform studies on the fairness of GNNs, namely German Credit~\cite{agarwal2021towards,dua2017uci}, Recidivism~\cite{agarwal2021towards,jordan2015effect}, and Credit Defaulter~\cite{agarwal2021towards,yeh2009comparisons}.
(1) \textit{German Credit.} Each node is a client in a German bank~\cite{dua2017uci}, while each edge between any two clients represents that they bear similar credit accounts. Here the gender of bank clients is considered as the sensitive attribute, and the task is to classify the credit risk of the clients as high or low.
(2) \textit{Recidivism.} Each node denotes a defendant released on bail at the U.S state courts during 1990-2009~\cite{jordan2015effect}, and defendants are connected based on the similarity of their past criminal records and demographics. Here the race of defendants is considered as the sensitive attribute, and the task is to classify defendants into more likely vs. less likely to commit a violent crime after being released.
(3) \textit{Credit Defaulter.} This dataset contains credit card users collected from financial agencies~\cite{yeh2009comparisons}. Specifically, each node in this network denotes a credit card user, and users are connected based on their spending and payment patterns. The sensitive attribute is the age period of users, and the task is to predict the future default of credit card for these users. We present the statistics of the three datasets above in~\cref{datasets}.
For the three real-world datasets used in this paper, we adopt the split rate for the training set and validation set as 0.4 and 0.55, respectively. The input node features are normalized before they are fed into the GNNs and the corresponding explanation models. For the downstream task \textit{node classification}, only the labels of the nodes in the training set is available for all models during the training process. 
The trained GNN models with the best performance on the validation set are preserved for test and explanation.

\begin{table}[]
\vspace{2mm}
\caption{The statistics and basic information about the six real-world datasets adopted for experimental evaluation. Sens. represents the semantic meaning of sensitive attribute.}
% \vspace{-1mm}
\label{datasets}
\centering
\setlength{\tabcolsep}{2.12pt}
\renewcommand{\arraystretch}{1.1}
\begin{tabular}{lcccccc}
\toprule
\textbf{Dataset}             & \textbf{German Credit}        & \textbf{Recidivism}   & \textbf{Credit Defaulter}       \\
\midrule
\textbf{\# Nodes}               & 1,000                      & 18,876             & 30,000                    \\
\textbf{\# Edges}              & 22,242                     & 321,308            & 1,436,858               \\
\textbf{\# Attributes}      & 27                         & 18                 & 13                         \\
\textbf{Avg. degree}         & 44.5                      & 34.0              & 95.8                      \\
\textbf{Sens.}        & Gender       & Race  & Age     \\
\textbf{Label}       & Credit status & Bail decision   & Future default   \\
\bottomrule
\end{tabular}
\end{table}

\noindent \textbf{Evaluation Metrics.} We perform evaluation from three main perspectives, including model utility, fairness, and certified defense.
To evaluate utility, we adopt the node classification accuracy. To evaluate fairness, we adopt the widely used metrics $\Delta_{\text{SP}}$ (measuring bias under \textit{Statistical Parity}) and $\Delta_{\text{EO}}$ (measuring bias under \textit{Equal Opportunity}).
To evaluate certified defense, we extend a traditional metric named \textit{Certified Accuracy}~\cite{wang2021certified,cohen2019certified} in our experiments, and we name it as \textit{Fairness Certification Rate} (FCR).
Specifically, existing GNN certification works mainly focus on a certain individual node, and utilize certified accuracy to measure the ratio of nodes that are correctly classified and also successfully certified out of all test nodes~\cite{wang2021certified}. In this paper, however, we perform certified (fairness) defense for individuals over an entire test set (instead of for any specific individual). Accordingly, we propose to sample multiple test sets out of nodes that are not involved in the training and validation set. Then we perform certified fairness defense for all sampled test sets, and utilize the ratio of test sets that are successfully certified over all sampled sets as the metric of certified defense. Here, FCR aims to use Monte Carlo method to estimate the probability of being successfully certified for a randomly sampled test node set.

\begin{table*}[t]
\setlength{\tabcolsep}{6.52pt}
\renewcommand{\arraystretch}{1.1}
\centering
\vspace{1mm}
\caption{Comparison between vanilla GNNs and certified GNNs under ELEGANT over three popular GNNs across three real-world datasets. Here ACC denotes node classification accuracy, and E- prefix marks out the GNNs under ELEGANT with certification. $\uparrow$ denotes the larger the value of this metric is, the better; $\downarrow$ represents the opposite. All numerical values in the table are in percentage, and the best ones are marked out in bold.}
\label{performance}
\begin{tabular}{cccccccccccc}
\hline
\hline
            & \multicolumn{3}{c}{\textbf{German Credit}} &  & \multicolumn{3}{c}{\textbf{Recidivism}} &  & \multicolumn{3}{c}{\textbf{Credit Defaulter}} \\
            \cline{2-4}  \cline{6-8}  \cline{10-12}
            & \textbf{ACC ($\uparrow$)}   & \textbf{Bias ($\downarrow$)}   & \textbf{FCR ($\uparrow$)}   &  & \textbf{ACC ($\uparrow$)}   & \textbf{Bias ($\downarrow$)}   & \textbf{FCR ($\uparrow$)}   &  & \textbf{ACC ($\uparrow$)}   & \textbf{Bias ($\downarrow$)}   & \textbf{FCR ($\uparrow$)}   \\
            \hline
            \hline
\textbf{SAGE}         &67.3 $_{\pm 2.14}$       & 50.6 $_{\pm 15.9}$       & N/A    &  & 89.8 $_{\pm 0.66}$      & 9.36 $_{\pm 3.15}$       & N/A      &  & \textbf{75.9 $_{\pm 2.18}$}      &  13.0 $_{\pm 4.01}$       & N/A      \\
\hline
\textbf{E-SAGE} &\textbf{71.0 $_{\pm 1.27}$}  & \textbf{16.3 $_{\pm 10.9}$}       & 98.7 $_{\pm 1.89}$     &  & \textbf{89.9 $_{\pm 0.90}$}      &\textbf{6.39 $_{\pm 2.85}$}        & 94.3 $_{\pm 6.65}$      &  & 73.4 $_{\pm 0.50}$      & \textbf{8.94 $_{\pm 0.99}$}       & 94.3 $_{\pm 3.30}$      \\
\hline
\hline
\textbf{GCN}         & \textbf{59.6 $_{\pm 3.64}$}      & 37.4 $_{\pm 3.24}$       & N/A      &  & \textbf{90.5 $_{\pm 0.73}$}      & 10.1 $_{\pm 3.01}$       & N/A      &  & \textbf{65.8 $_{\pm 0.29}$}      & 11.1 $_{\pm 3.22}$       & N/A      \\
\hline
\textbf{E-GCN} & 58.2 $_{\pm 1.82}$      &\textbf{3.52 $_{\pm 3.77}$}        & 96.3 $_{\pm 1.89}$      &  & 89.6 $_{\pm 0.74}$      & \textbf{9.56 $_{\pm 3.22}$}       & 96.0 $_{\pm 3.56}$      &  & 65.2 $_{\pm 0.99}$      &\textbf{7.28 $_{\pm 1.46}$}        & 92.7 $_{\pm 5.19}$      \\
\hline
\hline
\textbf{JK}         & \textbf{63.3 $_{\pm 4.11}$}      & 41.2 $_{\pm 18.1}$       & N/A      &  & \textbf{91.9 $_{\pm 0.54}$}      & 10.1 $_{\pm 3.15}$       & N/A      &  & 76.6 $_{\pm 0.69}$      & 9.24 $_{\pm 0.60}$       & N/A      \\
\hline
\textbf{E-JK} & 62.3 $_{\pm 4.07}$      & \textbf{22.4 $_{\pm 1.95}$}       & 97.0 $_{\pm 3.00}$      &  &89.3 $_{\pm 0.33}$       & \textbf{6.26 $_{\pm 2.78}$}       & 89.5 $_{\pm 10.5}$      &  &\textbf{77.7 $_{\pm 0.27}$}       & \textbf{3.37 $_{\pm 2.64}$}       & 99.3 $_{\pm 0.47}$      \\
\hline
\hline
\end{tabular}
\end{table*}

\noindent \textbf{GNN Backbones and Baselines.} Note that ELEGANT serves as a plug-and-play framework for any optimized GNNs ready to be deployed.
To evaluate the generality of ELEGANT across GNNs, we adopt three of the most representative GNNs spanning across simple and complex ones, namely Graph Sample and Aggregate Networks~\cite{hamilton2017inductive} (GraphSAGE), Graph Convolutional Networks~\cite{DBLP:conf/iclr/KipfW17} (GCN), and Jumping Knowledge Networks (JK).
Note that to the best of our knowledge, existing works on fairness certification cannot certify the attacks over two data modalities (i.e., continuous node attributes and binary graph topology) at the same time, and thus cannot be naively generalized onto GNNs.
Hence we compare the usability of GNNs before and after certification with ELEGANT.
Moreover, we also adopt two popular fairness-aware GNNs as baselines to evaluate bias mitigation, including FairGNN~\cite{dai2021say} and NIFTY~\cite{agarwal2021towards}. 
(1) \textit{FairGNN.} For FairGNN, we adopt the official implementations from~\cite{dai2021say}. Hyper-parameters corresponding to the GNN model structure (such as the number of hidden dimensions) are ensured to be the same as the vanilla GNNs for a fair comparison. Other parameters are carefully tuned under the guidance of the recommended training settings.
(2) \textit{NIFTY.}
For NIFTY, we use the official implementations given by~\cite{agarwal2021towards}. We ensured that the parameters related to the GNN model structure stay the same as the original GNNs for a fair comparison. We also adjust other parameters based on the suggested settings for better performance.

\noindent \textbf{Selection of $\epsilon$ and $\beta$.} 
There are two critical parameters, $\epsilon$ and $\beta$, that could affect the effectiveness of ELEGANT. These two parameters control the level of randomness for the added Gaussian and Bernoulli noise, respectively. 
Intuitively, larger $\epsilon$ and $\beta$ will induce more randomness in the node attributes and graph structure, which could make ELEGANT more robust to perturbations with larger sizes and thus achieve larger $\epsilon_{\bm{X}}$ and $\epsilon_{\bm{A}}$. However, if $\epsilon_{\bm{X}}$ and $\epsilon_{\bm{A}}$ are too large, the randomness could go beyond what the GNN classifier can manage and could finally cause failure in certification. Hence it is necessary to first determine appropriate values of $\epsilon_{\bm{X}}$ and $\epsilon_{\bm{A}}$ for ELEGANT.
Here we propose a strategy for parameter selection to realize as large certified defense budgets as possible.
Specifically, we first set an empirical $\eta$ to be 25\% higher than the fairness level of the corresponding vanilla GNN model. Such a threshold calibrates across different GNNs and can be considered as a reasonable threshold for the exhibited bias.
Then we determine two wide search spaces for $\sigma$ and $\beta$, respectively, and compute the averaged $\epsilon_{\bm{X}}$ and $\epsilon_{\bm{A}}$ from multiple runs over each pair of $\sigma$ and $\beta$ values.
We now rank $(\sigma, \beta)$ pairs based on the averaged $\epsilon_{\bm{X}}$ and $\epsilon_{\bm{A}}$ in a descending order, respectively.
Finally, we truncate the obtained two rankings from their most top-ranked $(\sigma, \beta)$ pair to the tail, until the two truncated rankings have the first overlapped $(\sigma, \beta)$ pair.
Such an identified $(\sigma, \beta)$ pair can achieve large and balanced certification budgets over both $\bm{A}$ and $\bm{X}$, and hence they are recommended.

\noindent \textbf{Threat Models.} 
We propose to evaluate the performance of ELEGANT and other fairness-aware GNN models under actual attacks on fairness.
We first introduce the threat model over graph structure. To the best of our knowledge, FA-GNN~\cite{hussain2022adversarial} is the only work that performs graph structure attacks targeting the fairness of GNNs.
Hence we adopt FA-GNN to attack graph structure.
In terms of node attributes, to the best of our knowledge, no existing work has made any explorations. Hence we directly utilize gradient ascent to perform attacks. Specifically, after structure attacks have been performed, we identify the top-ranked node attribute elements (out of the node attribute matrix) that positively influence the exhibited bias the most via gradient ascent. For any given budget (of attacks) on node attributes, we add perturbations to these elements in proportion to their gradients.

\subsection{RQ1: Fairness Certification Effectiveness}
To answer RQ1, we investigate the performance of different GNNs after certification across different real-world attributed network datasets over FCR, utility, and fairness. 
We present the experimental results across three GNN backbones and three real-world attributed network datasets in~\cref{performance}. Here bias is measured with $\Delta_{\text{SP}}$, and we have similar observations on $\Delta_{\text{EO}}$.
We summarize the main observations as follows: 
(1) \textbf{Fairness Certification Rate (FCR).}
We observe that ELEGANT realizes values of FCR around or even higher than 90\% for all three GNN backbones and three attributed network datasets, especially for the German Credit dataset, where vanilla GNNs tend to exhibit a high level of bias. The corresponding intuition is that, for nodes in any randomly sampled test set, we have a probability around or higher than 90\% to successfully certify the fairness level of the predictions yielded by the GNN model with our proposed framework ELEGANT. Hence ELEGANT achieves a satisfying fairness certification rate across all adopted GNN backbones and datasets.
(2) \textbf{Utility.}
We found that compared with those vanilla GNN backbones, certified GNNs with ELEGANT also exhibit comparable and even higher node classification accuracy values in all cases. Hence we conclude that our proposed framework ELEGANT does not significantly jeopardize the utility of the vanilla GNN models, and those certified GNNs with ELEGANT still bear a high level of usability in terms of node classification accuracy.
(3) \textbf{Fairness.}
Although the goal of ELEGANT is not debiasing GNNs, we observe that certified GNNs with ELEGANT achieve better performances in all cases in terms of algorithmic fairness compared with those vanilla GNNs. This demonstrates that the proposed framework ELEGANT also contributes to bias mitigation. We conjecture that such an advantage of debiasing could be a mixed result of (1) adding random noise on node attributes and graph topology (as in \cref{noise1} and \cref{noise2}) and (2) the proposed strategy of obtaining fair classification results (as in \cref{certification_practice}).

\subsection{RQ2: Fairness Certification under Attacks}
To answer RQ2, we perform attacks on the fairness of GCN, E-GCN, FairGNN (with a GCN backbone), and NIFTY (with a GCN backbone). Considering the large size of the quadratic space spanned by the sizes of perturbations $\bm{\Delta}_{\bm{A}}$ and $\bm{\Delta}_{\bm{X}}$, we present the evaluation under four representative ($\|\bm{\Delta}_{\bm{A}}\|_{0}$, $\|\bm{\Delta}_{\bm{X}}\|_F$) pairs. We set the threshold for bias $\eta$ to be 50\% higher than the fairness level of the vanilla GCN model on clean data, since it empirically helps to achieve a high certification success rate under large perturbations.

We present the utility and the fairness levels of the four models in terms of $\Delta_{\text{EO}}$ in~\cref{acc_under_attacks} and~\cref{under_attacks}, respectively. 
Note that ELEGANT determines the final output by examining the exhibited bias across the outputs obtained from Monte Carlo sampling, as discussed in~\cref{certification_practice}. 
As a result, evaluating fairness metrics such as $\Delta_{\text{EO}}$ requires explicit class predictions for test nodes under the certified setting. 
To this end, we utilize a vanilla GCN to predict the labels for test nodes and use these predictions to compute the corresponding fairness metrics. 
This evaluation protocol follows common practice in fairness assessment and allows us to consistently compare different models under fairness attacks. 
We summarize the main observations as follows, and similar trends are also observed across other GNN backbones and datasets.
% We present the utility and the fairness levels of the four models in terms of $\Delta_{\text{EO}}$ in~\cref{acc_under_attacks} and~\cref{under_attacks}, respectively. Note that since ELEGANT relies on determining the level of bias across the output of the Monte Carlo sampling to yield the final output (as discussed in \cref{certification_practice}), experiments based on $\Delta_{\text{EO}}$ requires estimated labels of test nodes.
% %
% Here we utilize a vanilla GCN to predict the labels for test nodes to obtain the classification results.
% %
% We summarize the main observations as follows, and we also have similar observations on other GNNs and datasets.
%
(1) \textbf{Utility.} We first focus on the comparison over the utility under attacks, where we utilize node classification accuracy as the indicator of model utility. The fairness-aware GNNs are found to exhibit better utility compared with the vanilla GNNs, which is a common observation consistent with a series of existing works~\cite{agarwal2021towards,dong2021edits}. More importantly, we observe that the ELEGANT does not jeopardize the performance of GNN compared with the utility of the vanilla GNN. This demonstrates a high level of usability for ELEGANT in real-world applications.
(2) \textbf{Fairness.} We found that the GCN model with the proposed framework ELEGANT achieves the lowest level of bias in all cases of fairness attacks. This observation is consistent with the superiority in fairness found in~\cref{performance}, which demonstrates that the fairness superiority of ELEGANT maintains even under attacks within a wide range of attacking perturbation sizes.
(3) \textbf{Certification on Fairness.} We now compare the performance of E-GCN across different attacking perturbation sizes. We observed that under relatively small attacking perturbation sizes, i.e., ($2^0, 10^{-1}$), ($2^1, 10^{0}$), and ($2^2, 10^{1}$), ELEGANT successfully achieves certification over fairness, and the bias level increases slowly as the size of attacks increases.
Under relatively large attacking perturbation size, i.e., ($2^3, 10^{2}$), although the attacking budgets go beyond the certified budgets, GCN under ELEGANT still exhibits a fairness level far lower than the given bias threshold $\eta$, and the fairness superiority maintains.
Therefore, we draw the conclusion that the adopted estimation strategies are safe towards the goal of achieving fairness certification.

\begin{figure}[t]
    \centering
\includegraphics[width=0.9\linewidth]{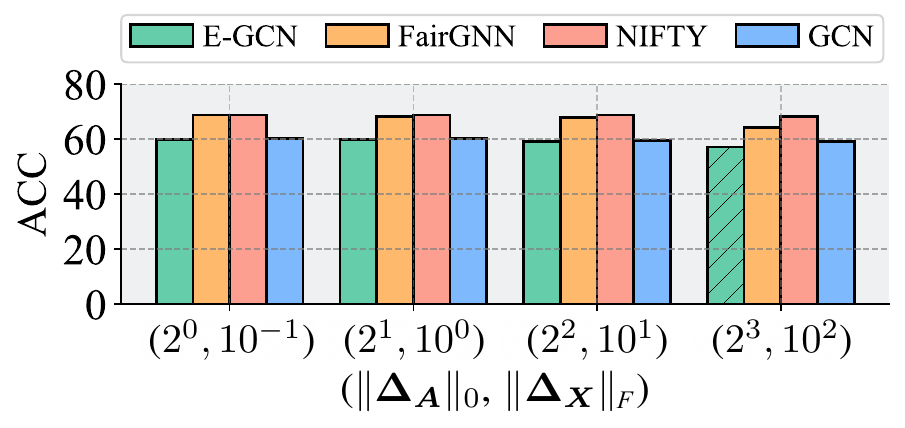}
    \caption{The utility of GCN, E-GCN, FairGNN, and NIFTY under fairness attacks on German Credit. The shaded bar indicates that certified budget $\epsilon_{\bm{A}} \leq \|\bm{\Delta}_{\bm{A}}\|_{0}$ or $\epsilon_{\bm{X}} \leq  \|\bm{\Delta}_{\bm{X}}\|_{F}$.} 
    \label{acc_under_attacks}
\end{figure}

\begin{figure}[t]
\vspace{-3mm}
    \centering
\includegraphics[width=0.83\linewidth]{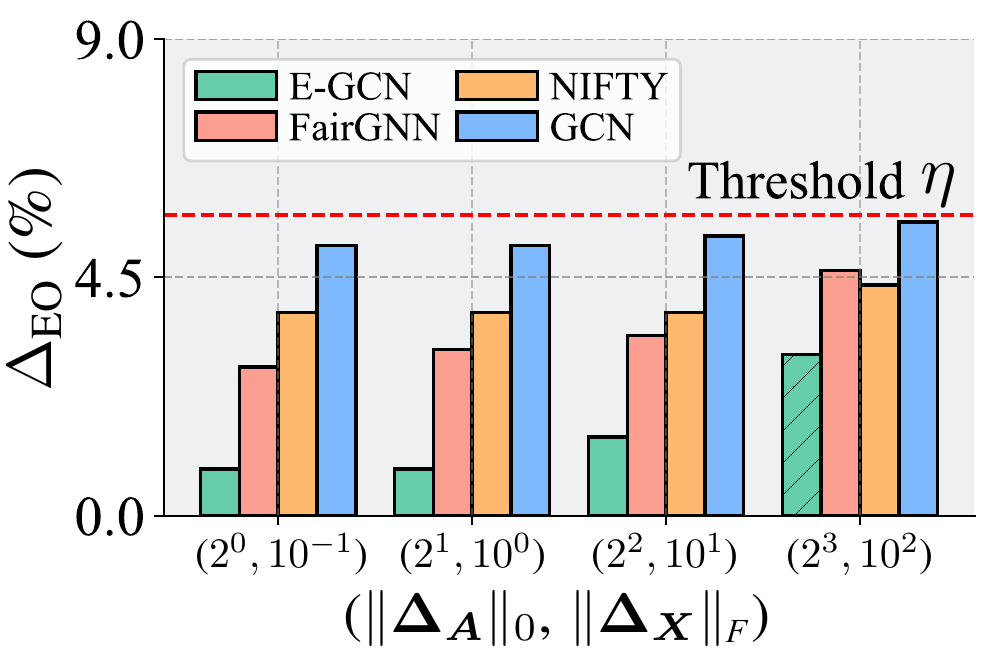}
    \caption{The bias levels of GCN, E-GCN, FairGNN, and NIFTY under fairness attacks on German Credit. The shaded bar indicates that certified budget $\epsilon_{\bm{A}} \leq \|\bm{\Delta}_{\bm{A}}\|_{0}$ or $\epsilon_{\bm{X}} \leq  \|\bm{\Delta}_{\bm{X}}\|_{F}$. The y-axis is in logarithmic scale for visualization purposes.} 
    \label{under_attacks}
\end{figure}

\begin{figure}[!t]
\centering
    \begin{subfigure}[t]{0.445\textwidth}
    \includegraphics[width=1.04\textwidth]{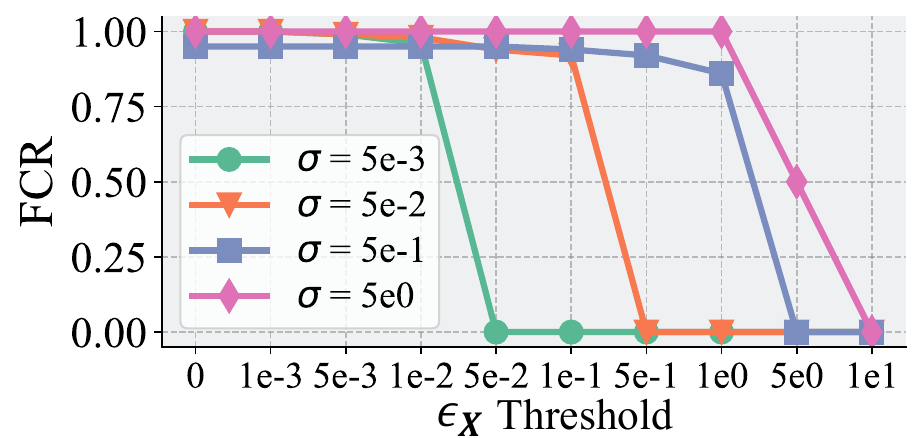}
        \caption[Network2]%
        {{FCR of certification for $\sigma$ over node attributes}} 
        \label{param1}
    \end{subfigure}
    
    \vspace{3mm}
    \begin{subfigure}[t]{0.445\textwidth}
    \includegraphics[width=1.04\textwidth]{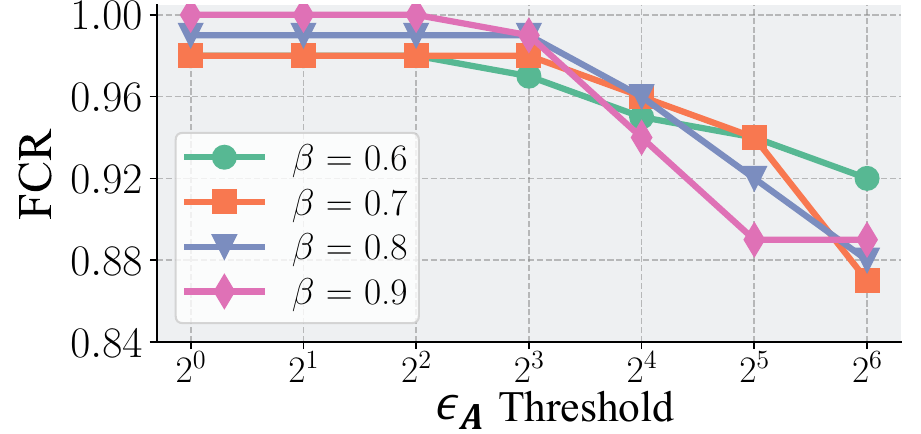}
        \caption[Network2]%
        {{FCR of certification for $\beta$ over graph topology}}    
        \label{param2}
    \end{subfigure} 
\caption{Parameter study of $\sigma$ over $\epsilon_{\bm{X}}$ (a) and $\beta$ over $\epsilon_{\bm{A}}$ (b). Experimental results are presented based on GCN over German credit and Credit Defaulter for (a) and (b), respectively. Similar tendencies of FCR value w.r.t. the threshold values can also be observed based on other GNNs and datasets.}
\label{effectiveness-2}
\end{figure}

\subsection{RQ3: Parameter Study}
\label{param_study}
To answer RQ3, we propose to perform parameter study focusing on two most critical parameters, $\sigma$ and $\beta$. 
To examine how $\sigma$ and $\beta$ influence the effectiveness of ELEGANT in terms of both FCR and certified defense budgets, we set numerical ranges for $\epsilon_{\bm{X}}$ (from 0 to 1e1) and $\epsilon_{\bm{A}}$ (from 0 to $2^6$) and divide the two ranges into grids. 
In both ranges, we consider the dividing values of the grids as thresholds for certification budgets.
In other words, under each threshold, we only consider the test sets with the corresponding certified defense budget being larger than this threshold as successfully certified ones, and the values of FCR are re-computed accordingly.
Our rationale here is that with the thresholds (for $\epsilon_{\bm{X}}$ and $\epsilon_{\bm{A}}$) increasing, if FCR reduces slowly, this demonstrates that most successfully certified test sets are associated with large certified defense budgets. However, if FCR reduces fast, then most successfully certified test sets only bear small certified defense budgets. We present other detailed settings in Appendix of the online version.

Here we present the experimental results of $\sigma$ and $\beta$ on the basis of GCN model using German Credit dataset in~\cref{param1} and Credit Defaulter in~\cref{param2}, respectively. We also have similar observations on other GNNs and datasets.
We summarize the main observations as follows: 
(1) \textbf{Analysis on $\sigma$.}
We observe that most cases with larger $\sigma$ are associated with a larger FCR compared with the cases where $\sigma$ is relatively small. In other words, larger values of $\sigma$ typically make FCR reduce slower w.r.t. the increasing of $\epsilon_{\bm{X}}$ threshold. This indicates that increasing the value of $\sigma$ helps realize larger certified defense budgets on node attributes, i.e., the increase of $\sigma$ dominates the tendency of $\epsilon_{\bm{X}}$ given in~\cref{x_r}.
Nevertheless, it is worth mentioning that if $\sigma$ is too large, the information encoded in the node attributes could be swamped by the Gaussian noise and finally corrupt the classification accuracy. Hence moderately large values for $\sigma$, e.g., 5e-1 and 5e0, are recommended.
(2) \textbf{Analysis on $\beta$.}
We found that (1) for cases with relatively large $\beta$ (e.g., 0.8 and 0.9), the FCR also tends to be larger (compared with cases where $\beta$ is smaller) at $\epsilon_{\bm{A}}$ threshold being 0. 
Such a tendency is reasonable, since in these cases, the expected magnitude of the added Bernoulli noise is small. Correspondingly, GNNs under ELEGANT perform similarly to vanilla GNNs, and thus an $\eta$ larger than the bias level of vanilla GNNs is easier to be satisfied (compared with cases under smaller values of $\beta$); (2) for cases with relatively large $\beta$, the value of FCR generally reduces faster (w.r.t. $\epsilon_{\bm{A}}$ threshold) than cases where $\beta$ is smaller. 
Therefore, we recommend that for any test set of nodes: (1) if the primary goal is to achieve certification with a high probability, then larger values for $\beta$ (e.g., 0.8 and 0.9) would be preferred; (2) if the goal is to achieve certification with larger budgets on the graph topology, smaller values for $\beta$ (e.g., 0.6 and 0.7) should be selected.

\section{Related Work}

\noindent \textbf{Algorithmic Fairness of GNNs.}
Despite the success of GNNs in a plethora of downstream tasks, most GNNs do not have fairness considerations. As a consequence, how to equip GNNs with fairness consideration has attracted much research attention in recent years~\cite{wang2022improving,zhao2025fairness,zhang2024trustworthy}. In general, 
existing GNN works focusing on fulfilling algorithmic fairness mainly focus on group fairness and individual fairness~\cite{dong2022fairness}. Specifically, group fairness requires that each demographic subgroup (divided by sensitive attributes such as gender and race) in the graph should have their fair share of interest based on predictions~\cite{mehrabi2021survey}. Adversarial training is among the most popular strategies~\cite{dai2021say,dong2022fairness}. 
Its core idea is to train an adversary, such that the learned embeddings can be decoupled from sensitive information during the min-max game between the adversary and the GNN model.
In addition, regularization~\cite{agarwal2021towards,fan2021fair,zhang2021multi}, topology modification~\cite{dong2021edits,spinelli2021biased}, and orthogonal projection~\cite{palowitchdebiasing} are also commonly used strategies. On the other hand, individual fairness it requires that similar individuals should be treated similarly~\cite{DBLP:conf/innovations/DworkHPRZ12}, where such similarity may be determined in different ways~\cite{KangHMT20,dong2021individual}. Designing optimization regularization terms to promote individual fairness for GNNs is a common strategy~\cite{fan2021fair,dong2021individual,song2022guide}. The success of these explorations has been evidenced by various real-world applications of fairness-aware GNNs~\cite{chen2024fairness,zhang2024towards,xia2025fairtp,jin2023survey,zhang2024understanding,yoo2024ensuring,abdelrazek2023fairup}.
Nevertheless, despite the research advancements in the field of algorithmic fairness on GNNs, the adversarial defense against fairness attacks still remains in its infancy and has not been thoroughly explored. To the best of our knowledge, our paper serves as the first comprehensive study dedicated to addressing this important research problem, paving the way for future investigations in this under-explored area.

\noindent 
\textbf{Certified Defense in Discrete Input Spaces.}
Recent studies have proposed certified defenses that guarantee the same predicted label on clean and perturbed input data under limited perturbation budgets. 
One prominent approach is randomized smoothing~\cite{cohen2019certified}, which enhances the robustness of classifiers by introducing Gaussian noise. 
However, in discrete input spaces such as graph structures, Gaussian noise can lead to loose certification bounds~\cite{lee2019tight}.
One prominent approach for certified defense is randomized smoothing~\cite{cohen2019certified} where Gaussian noise is added to improve the robustness of the smoothed classifier.
However, when tackling the discrete input space such as graph structures, adding Gaussian noise can lead to loose certification bound~\cite{lee2019tight}.
To address this issue, a pioneering method achieving certified defense in discrete input spaces (i.e., point-wise certification)~\cite{lee2019tight} was proposed for binary classification tasks, aiming to tighten the certification bound in discrete spaces. 
It partitions the discrete input space into chunks with identical noise ratios. 
By accumulating the output probabilities across these chunks, a tight lower bound of the perturbed output probability can be achieved. 
If this lower bound exceeds $0.5$, the certification region can be properly characterized.
Building upon this, subsequent studies have extended point-wise certification to different problem settings~\cite{jia2020certified}. 
One approach~\cite{bojchevski2020efficient} focused on binary graph structures and introduced a data-dependent Bernoulli noise, which reduces the number of partitioned chunks. 
Given the sparsity of graph structures, this method improves the efficiency of point-wise certification for GNNs~\cite{lee2019tight}. 
Another extension~\cite{wang2021certified} adapted point-wise certification to multi-class classification.
Different from these methods, our structure certification extends the approach in~\cite{bojchevski2020efficient} with data-independent noise. In this way, such noise can be generated before the graph data is given, and thus it facilitates a composition of certification for graph structure and node attributes.

\section{Conclusion}

In this paper, we take initial steps to tackle a novel problem of certifying GNN node classifiers on their fairness levels. 
Specifically, we aim to achieve a classifier on top of any optimized GNN node classifier associated with certain perturbation budgets, such that it is impossible for attackers to corrupt the fairness level of GNNs within such budgets. To address this problem, we propose a principled framework, ELEGANT, which achieves certification on top of any optimized GNN node classifier associated with certain perturbation budgets, such that it is impossible for attackers to corrupt the fairness level of predictions within such budgets. Notably, ELEGANT is designed to serve as a plug-and-play framework for any optimized GNNs ready to be deployed and does not rely on any assumption over GNN structure or parameters. Extensive experiments verify the satisfying effectiveness of ELEGANT. In addition, we also found ELEGANT beneficial to GNN debiasing, and explored how its parameters influence the certification performance. We leave certifying the fairness level of GNNs over other learning tasks on graphs as future works.

\section{Acknowledgements}

This work is supported in part by the National Science Foundation (NSF) under grants IIS-2144209, IIS-2223769, CNS-2154962, BCS-2228534, IIS-2416070, and CMMI-2411248; the Office of Naval Research (ONR) under grant N000142412636; and the Commonwealth Cyber Initiative (CCI) under grant VV-1Q24-011, and the gift funding from Netflix and Snap.

\bibliographystyle{ACM-Reference-Format}
\balance
\bibliography{ref}

\appendix

% \linespread{1.2}
% \selectfont

\allowdisplaybreaks

\section{Proofs}

\label{proofs}

For better clarity, for a matrix $\bm{X}$, we use $\bm{X}[i,j]$ to denote the element at the $i$-th row and the $j$-th column; for a vector $\bm{x}$, we use $\bm{x}[i]$ to denote its $i$-th component.

\subsection{Proof of Theorem 2}
\label{th4}

% \begin{theorem}\label{thm:attr defense}
% (Certified Defense Budget for Attribute Perturbations) 
% Denote $\bar{\mathcal{A}}$ as the set of all possible $(n \times n)$-matrices, where entries in rows whose indices associate with those vulnerable nodes may take 1 or 0, while other entries are zeros.
% %
% The certified defense budget $\epsilon_{\bm{X}}$ for $\tilde{g}_{\bm{A},\bm{X}}$ is given as
% %
% \begin{align}
%   \epsilon_{\bm{X}} = \min \left\{\tilde{\epsilon_{\bm{X}}}: \tilde{\epsilon_{\bm{X}}} \; \text{is derived with classifier} \; \tilde{g}_{\bm{X}}(\bm{A} \oplus \bm{\Gamma}_{\bm{A}}, \bm{X}) \text{ , where} \; \bm{\Gamma}_{\bm{A}} \in \bar{\mathcal{A}}\right\}. 
% \end{align}
% \end{theorem}

\begin{proof}
Below we discuss the proofs for the budget of structure perturbation and node attribute perturbation, respectively.

\noindent \textbf{Structure Perturbation.} 
The proofs from~\cite{wang2021certified} is directly applicable to the budget for structure perturbation here.

\noindent \textbf{Node Attribute Perturbation.} 
Recall that 
\begin{equation}
\begin{aligned}
&\tilde{g}_{\bm{A},\bm{X}}(\bm{A}, \bm{X}) = \mathrm{argmax}_{c \in \{0, 1\}} \mathrm{Pr}( \tilde{g}_{\bm{X}}(\bm{A}\oplus\bm{\Gamma}_{\bm{A}}, \bm{X}) = c) \; \text{and} \\ 
&\tilde{g}_{\bm{X}}(\bm{A}, \bm{X}) = \mathrm{argmax}_{c \in \{0, 1\}}\mathrm{Pr}( g(f_{\bm{\theta}^*}, \bm{A}, \bm{X} + \bm{\Gamma}_{\bm{X}}, \eta, \mathcal{V}_{\text{tst}}) = c),
\end{aligned}
\end{equation}
% $\tilde{g}_{\bm{A},\bm{X}}(\bm{A}, \bm{X}) = \mathrm{argmax}_{c \in \{0, 1\}} \mathrm{Pr}( \tilde{g}_{\bm{X}}(\bm{A}\oplus\bm{\Gamma}_{\bm{A}}, \bm{X}) = c)$ and $\tilde{g}_{\bm{X}}(\bm{A}, \bm{X}) = \mathrm{argmax}_{c \in \{0, 1\}}\mathrm{Pr}( g(f_{\bm{\theta}^*}, \bm{A}, \bm{X} + \bm{\Gamma}_{\bm{X}}, \eta, \mathcal{V}_{\text{tst}}) = c)$. 
to certify the fairness level, we assume that $\tilde{g}_{\bm{A},\bm{X}}(\bm{A}, \bm{X})=1$, which means that
\begin{equation}
\mathrm{Pr}(\tilde{g}_{\bm{X}}(\bm{A}\oplus\bm{\Gamma}_{\bm{A}}, \bm{X}) = 1)>0.5.
\end{equation}
For any perturbation $\|\bm{\Delta}_{\bm{X}}\|_2\leq\epsilon_{\bm{X}}\leq\tilde{\epsilon_{\bm{X}}}$, we have $\tilde{g}_{\bm{X}}(\bm{A}\oplus\bm{\Gamma}_{\bm{A}}, \bm{X}+\bm{\Delta}_{\bm{X}}) = \tilde{g}_{\bm{X}}(\bm{A}\oplus\bm{\Gamma}_{\bm{A}}, \bm{X})$ for any $\bm{\Gamma}_{\bm{A}} \in \bar{\mathcal{A}}$ where $\tilde{\epsilon_{\bm{X}}}$ is derived with classifier $\tilde{g}_{\bm{X}}(\bm{A} \oplus \bm{\Gamma}_{\bm{A}}, \bm{X})$.
Regarding the randomness of $\bm{\Gamma}_{\bm{A}}$, we have
\begin{equation}\label{eq:sample space inequality}
\mathrm{Pr}(\tilde{g}_{\bm{X}}(\bm{A}\oplus\bm{\Gamma}_{\bm{A}}, \bm{X}+\bm{\Delta}_{\bm{X}}) = 1)=\mathrm{Pr}(\tilde{g}_{\bm{X}}(\bm{A}\oplus\bm{\Gamma}_{\bm{A}}, \bm{X}) = 1)>0.5.
\end{equation}
% Eq.~(\ref{eq:sample space inequality}) holds because $\{\bm{\Gamma}_{\bm{A}}\;|\;\tilde{g}_{\bm{X}}(\bm{A}\oplus\bm{\Gamma}_{\bm{A}}, \bm{X}) = 1\}\subseteq\{\bm{\Gamma}_{\bm{A}}\;|\;\tilde{g}_{\bm{X}}(\bm{A}\oplus\bm{\Gamma}_{\bm{A}}, \bm{X}+\bm{\Delta}_{\bm{X}}) = 1\}$. 
Hence we obtain that $\tilde{g}_{\bm{A},\bm{X}}(\bm{A}, \bm{X}+\bm{\Delta}_{\bm{X}})=\tilde{g}_{\bm{A},\bm{X}}(\bm{A}, \bm{X})$ for any perturbation $\|\bm{\Delta}_{\bm{X}}\|_2\leq\epsilon_{\bm{X}}$. 
\end{proof}

\subsection{Proof of Proposition 1}
\label{proposition1}

% \begin{proposition}
% (Probabilistic Guarantee for the Fairness Level of Node Classification).
% For $\hat{\bm{Y}} = \mathrm{argmin}_{\hat{\bm{Y}}'} \pi(\hat{\bm{Y}}', \mathcal{V}_{\text{tst}}), s.t. \; \hat{\bm{Y}}' \in \hat{\mathcal{Y}}'$, we have $\mathrm{Pr}(\pi(\hat{\bm{Y}}, \mathcal{V}_{\text{tst}}) > \eta) < 0.5^{|\hat{\mathcal{Y}}'|}$.
% \end{proposition}

\begin{proof}
For the practical certification, we add perturbations within certified budgets and derive independent identically distributed output samples $\hat{\mathcal{Y}}'$ by Monte Carlo. For each sample $\hat{\bm{Y}}'\in\hat{\mathcal{Y}}'$, we have $\mathrm{Pr}(\pi(\hat{\bm{Y}}', \mathcal{V}_{\text{tst}}) > \eta)<0.5$ according to~\cref{x_r} and~\cref{a_r}. $\pi(\hat{\bm{Y}}, \mathcal{V}_{\text{tst}}) > \eta$ indicates that $\pi(\hat{\bm{Y}}', \mathcal{V}_{\text{tst}}) > \eta$ for any $\hat{\bm{Y}}'\in\hat{\mathcal{Y}}'$. Consequently, we have
\begin{equation}
\mathrm{Pr}(\pi(\hat{\bm{Y}}, \mathcal{V}_{\text{tst}}) > \eta)=\Pi_{\hat{\bm{Y}}'\in\hat{\mathcal{Y}}'}\mathrm{Pr}(\pi(\hat{\bm{Y}}', \mathcal{V}_{\text{tst}}) > \eta)<0.5^{|\hat{\mathcal{Y}}'|}.
\end{equation}
\end{proof}

\section{Reproducibility and Supplementary Analysis}

In this section, our primary emphasis is on ensuring the replicability of our experiments, which serves as an extension to Section 4. To begin with, we offer a comprehensive introduction of the three real-world datasets adopted in our experiments. Subsequently, we introduce the detailed experimental settings, as well as the implementation details of our proposed framework, ELEGANT, alongside GNNs and baseline models. Moreover, we outline those essential packages, including their versions, that were utilized in our experiments. Lastly, we elaborate on the supplementary analysis on the time complexity of ELEGANT. Open-source code is available at: \url{https://github.com/yushundong/ELEGANT}.

\subsection{Datasets}

%
% In this paper, we focus on the most widely studied node classification task, which is ones of the most representative tasks in the domain of learning on graphs.
%
% facebook
% income
% dblp
% credit
% bail
%
In our experiments, we adopt three real-world network datasets that are widely used to perform studies on the fairness of GNNs, namely German Credit~\cite{dua2017uci,agarwal2021towards}, Recidivism~\cite{jordan2015effect,agarwal2021towards}, and Credit Defaulter~\cite{yeh2009comparisons,agarwal2021towards}.
We introduce their basic information below. 

\noindent
\textbf{(1) German Credit.} Each node is a client in a German bank~\cite{dua2017uci}, while each edge between any two clients represents that they bear similar credit accounts. Here the gender of bank clients is considered as the sensitive attribute, and the task is to classify the credit risk of the clients as high or low.

\noindent
\textbf{(2) Recidivism.} Each node denotes a defendant released on bail at the U.S state courts during 1990-2009~\cite{jordan2015effect}, and defendants are connected based on the similarity of their past criminal records and demographics. Here the race of defendants is considered as the sensitive attribute, and the task is to classify defendants into more likely vs. less likely to commit a violent crime after being released.

\noindent
\textbf{(3) Credit Defaulter.} This dataset contains credit card users collected from financial agencies~\cite{yeh2009comparisons}. Specifically, each node in this network denotes a credit card user, and users are connected based on their spending and payment patterns. The sensitive attribute is the age period of users, and the task is to predict the future default of credit card for these users. We present the statistics pf the three datasets above in~\cref{datasets}.

For the three real-world datasets used in this paper, we adopt the split rate for the training set and validation set as 0.4 and 0.55, respectively. The input node features are normalized before they are fed into the GNNs and the corresponding explanation models. For the downstream task \textit{node classification}, only the labels of the nodes in the training set is available for all models during the training process. 
The trained GNN models with the best performance on the validation set are preserved for test and explanation.

% \begin{table*}[]
% \vspace{3mm}
% \caption{The statistics and basic information about the six real-world datasets adopted for experimental evaluation. Sens. represents the semantic meaning of sensitive attribute.}
% % \vspace{-3mm}
% \label{datasets}
% % \footnotesize
% \centering
% \setlength{\tabcolsep}{20.12pt}
% \renewcommand{\arraystretch}{1.2}
% \begin{tabular}{lcccccc}
% \toprule
% \textbf{Dataset}             & \textbf{German Credit}        & \textbf{Recidivism}   & \textbf{Credit Defaulter}       \\
% \midrule
% \textbf{\# Nodes}               & 1,000                      & 18,876             & 30,000                    \\
% \textbf{\# Edges}              & 22,242                     & 321,308            & 1,436,858               \\
% \textbf{\# Attributes}      & 27                         & 18                 & 13                         \\
% \textbf{Avg. degree}         & 44.5                      & 34.0              & 95.8                      \\
% \textbf{Sens.}        & Gender       & Race  & Age     \\
% \textbf{Label}       & Credit status & Bail decision   & Future default   \\
% \bottomrule
% \end{tabular}
% % \vspace{-1.0em}
% \end{table*}

\subsection{Detailed Experimental Settings}
\label{detail_settings}

\noindent \textbf{Implementation of GNN Models.} 
In our experiments, all GNN models are implemented in PyTorch~\cite{paszke2017automatic} with PyG (PyTorch Geometric)~\cite{pyglibrary}.
For the corresponding hyper-parameters, we set the value of weight decay as 5e-4, with the hidden dimension number and dropout rate being 64 and 0.6, respectively.
In addition, we set the learning rate and epoch number as 5e-2 and 200 for training.

\noindent \textbf{Implementation of ELEGANT.} 
ELEGANT is implemented in PyTorch~\cite{paszke2017automatic} with MIT license and all GNNs under ELEGANT are optimized through Adam optimizer~\cite{kingma2014adam} on Nvidia A6000. In our experiments, the sampling sizes of Gaussian noise and Bernoulli noise are 150 and 200, respectively.
All hyper-parameters for GNNs under ELEGANT are set as the same values as the hyper-parameters adopted for vanilla GNNs.
%
% We propose to add Gaussian and Bernoulli noise (to node attributes and graph topology) during training, which empirically leads to better certification performance, i.e., larger certification budgets over both node attributes and graph topology. Specifically, we set the entry-wise probability of flipping the existence of an edge and the standard deviation of the added Gaussian noise as 2e-4 and 2e-5, respectively.
%
We set the confidence level as 0.7 for estimation, since a lower confidence level helps exhibit a clearer tendency of the change of certified budgets w.r.t. other parameters under a limited number of sampling size, considering the computational costs.
In the test phase, we set the sampled ratio for certification (from the nodes out of training and validation set) to be 0.9 to make the sampled size relatively large, in which way we include more nodes in the set of nodes to be certified.
In each run, we sample 100 times, and the value of FCR is averaged across three runs with different seeds.
In the parameter study for $\beta$, we reduce the total number of nodes used to ensure efficiency, while similar tendency is also observed in different settings.
Finally, considering the sizes of the three datasets, we set the nodes that are vulnerable (i.e., nodes whose perturbations are accessible to attackers) to be 5\% for German Credit and 1\% for others.

\subsection{Certification under Different Fairness Metrics} 

In Section 4.2, we present the experimental results based on the fairness metric of $\Delta_{\text{SP}}$, which measures the exhibited bias under the fairness notion of \textit{Statistical Parity}. We also perform the experiments based on $\Delta_{\text{EO}}$, which measures the exhibited bias under the fairness notion of \textit{Equal Opportunity}. We present the experimental results in~\cref{performance2}. We summarize the observations below.
(1) \textbf{Fairness Certification Rate (FCR).}
We observe that ELEGANT realizes large values of FCR (larger than 80\%) for all three GNN backbones and three attributed network datasets. Similar to our discussion in Section 4.2, this demonstrate that for nodes in any randomly sampled test set, we have a probability around or larger than 80\% to successfully certify the fairness level of the predictions yielded by the GNN model with our proposed framework ELEGANT. As a consequence, we argue that ELEGANT also achieves a satisfying fairness certification rate across all adopted GNN backbones and datasets on the basis of $\Delta_{\text{EO}}$.
In addition, we also observe that the German Credit dataset bears relatively larger values of FCR, while the values of FCR are relatively smaller with relatively larger standard deviation values on Recidivism and Credit Defaulter datasets.
A possible reason is that we set the threshold (i.e., $\eta$) as a value 25\% higher than the bias exhibited by the vanilla GNNs. Consequently, if the vanilla GNNs already exhibit a low level of bias, the threshold determined with such a strategy could be hard to satisfy under the added noise. This evidence indicates that the proposed framework ELEGANT tends to deliver better performance under scenarios where vanilla GNNs exhibit a high level of bias with the proposed strategy.
(2) \textbf{Utility.}
Compared with vanilla GNNs, certified GNNs with ELEGANT exhibit comparable and even higher node classification accuracy values in all cases. Therefore, we argue that the proposed framework ELEGANT does not significantly jeopardize the utility of the vanilla GNN models in certifying the fairness level of node classification.
(3) \textbf{Fairness.}
We observe that certified GNNs with ELEGANT are able to achieve better performances in terms of algorithmic fairness compared with those vanilla GNNs. This evidence indicates that the proposed framework ELEGANT also helps to mitigate the exhibited bias (by the backbone GNN models). We conjecture that such bias mitigation should be attributed to the same reason discussed in Section 4.2.

% \noindent \textbf{Training GNNs with Noise.} As mentioned in Section~\ref{settings}, we propose to add noise for GNNs (over both node attributes and graph topology) during training to obtain better certification performance for the GNNs under ELEGANT. Here we compare the certification performance for the GNNs under different level of noise.
% \red{To be completed.}

% new
\begin{table*}[]
\small
  \vspace{-3mm}
\setlength{\tabcolsep}{8.02pt}
\renewcommand{\arraystretch}{1.0}
\centering
\vspace{-2mm}
\caption{Comparison between vanilla GNNs and certified GNNs under ELEGANT over three popular GNNs across three real-world datasets. Here ACC is node classification accuracy, and E- prefix marks out the GNNs under ELEGANT with certification. $\uparrow$ denotes the larger, the better; $\downarrow$ denotes the opposite. Different from the table in Section 4.2 (where the bias is measured with $\Delta_{\text{SP}}$), the bias is measured with $\Delta_{\text{EO}}$ here. Numerical values are in percentage, and the best ones are in bold.}
\label{performance2}
\begin{tabular}{cccccccccccc}
% \setlength\tabcolsep{2.2pt}
% \hline
% \hline
\toprule
            & \multicolumn{3}{c}{\textbf{German Credit}} &  & \multicolumn{3}{c}{\textbf{Recidivism}} &  & \multicolumn{3}{c}{\textbf{Credit Defaulter}} \\
            \cline{2-4}  \cline{6-8}  \cline{10-12}
            & \textbf{ACC ($\uparrow$)}   & \textbf{Bias ($\downarrow$)}   & \textbf{FCR ($\uparrow$)}   &  & \textbf{ACC ($\uparrow$)}   & \textbf{Bias ($\downarrow$)}   & \textbf{FCR ($\uparrow$)}   &  & \textbf{ACC ($\uparrow$)}   & \textbf{Bias ($\downarrow$)}   & \textbf{FCR ($\uparrow$)}   \\
            % \hline
            % \hline
            \midrule
\textbf{SAGE}         &67.3 $_{\pm 2.14}$       & 41.8 $_{\pm 11.0}$       & N/A    &  & 89.8 $_{\pm 0.66}$      & 6.09 $_{\pm 3.10}$       & N/A      &  & \textbf{75.9 $_{\pm 2.18}$}      &  10.4 $_{\pm 1.59}$       & N/A      \\
% \hline
\textbf{E-SAGE} &\textbf{72.2 $_{\pm 1.26}$}  & \textbf{8.63 $_{\pm 6.15}$}       & 100 $_{\pm 0.00}$     &  & \textbf{90.8 $_{\pm 0.97}$}      &\textbf{3.12 $_{\pm 3.64}$}        & 81.0 $_{\pm 13.0}$      &  & 73.4 $_{\pm 0.61}$      & \textbf{7.18 $_{\pm 1.06}$}       & 88.7 $_{\pm 6.02}$      \\
% \hline
% \hline
\textbf{GCN}         & \textbf{59.6 $_{\pm 3.64}$}      & 35.0 $_{\pm 4.77}$       & N/A      &  & \textbf{90.5 $_{\pm 0.73}$}      & 6.35 $_{\pm 1.65}$       & N/A      &  & \textbf{65.8 $_{\pm 0.29}$}      & 13.5 $_{\pm 4.23}$       & N/A      \\
% \hline
\textbf{E-GCN} & 58.8 $_{\pm 3.74}$      &\textbf{29.8 $_{\pm 6.82}$}        & 93.3 $_{\pm 8.73}$      &  & 89.3 $_{\pm 0.92}$      & \textbf{3.93 $_{\pm 3.12}$}       & 96.0 $_{\pm 4.97}$      &  & 63.5 $_{\pm 0.37}$      &\textbf{9.12 $_{\pm 0.95}$}        & 80.5 $_{\pm 14.5}$      \\
% \hline
% \hline
\textbf{JK}         & 63.3 $_{\pm 4.11}$      & 37.7 $_{\pm 15.9}$       & N/A      &  & \textbf{91.9 $_{\pm 0.54}$}      & 5.26 $_{\pm 3.25}$       & N/A      &  & 76.6 $_{\pm 0.69}$      & 8.04 $_{\pm 0.57}$       & N/A      \\
% \hline
\textbf{E-JK} & \textbf{63.4 $_{\pm 3.68}$}      & \textbf{31.2 $_{\pm 15.5}$}       & 93.7 $_{\pm 8.96}$      &  &90.1 $_{\pm 0.55}$       & \textbf{2.54 $_{\pm 1.62}$}       & 83.7 $_{\pm 8.96}$      &  &\textbf{76.9 $_{\pm 0.86}$}       & \textbf{2.90 $_{\pm 2.04}$}       & 95.7 $_{\pm 4.80}$      \\
\bottomrule
\end{tabular}
% \vskip -4ex
\end{table*}

\subsection{Ordering the Inner and Outer Defense} 

\begin{figure}[t]
\centering
% \vspace{-10.9mm}
	\includegraphics[width=0.75\linewidth]{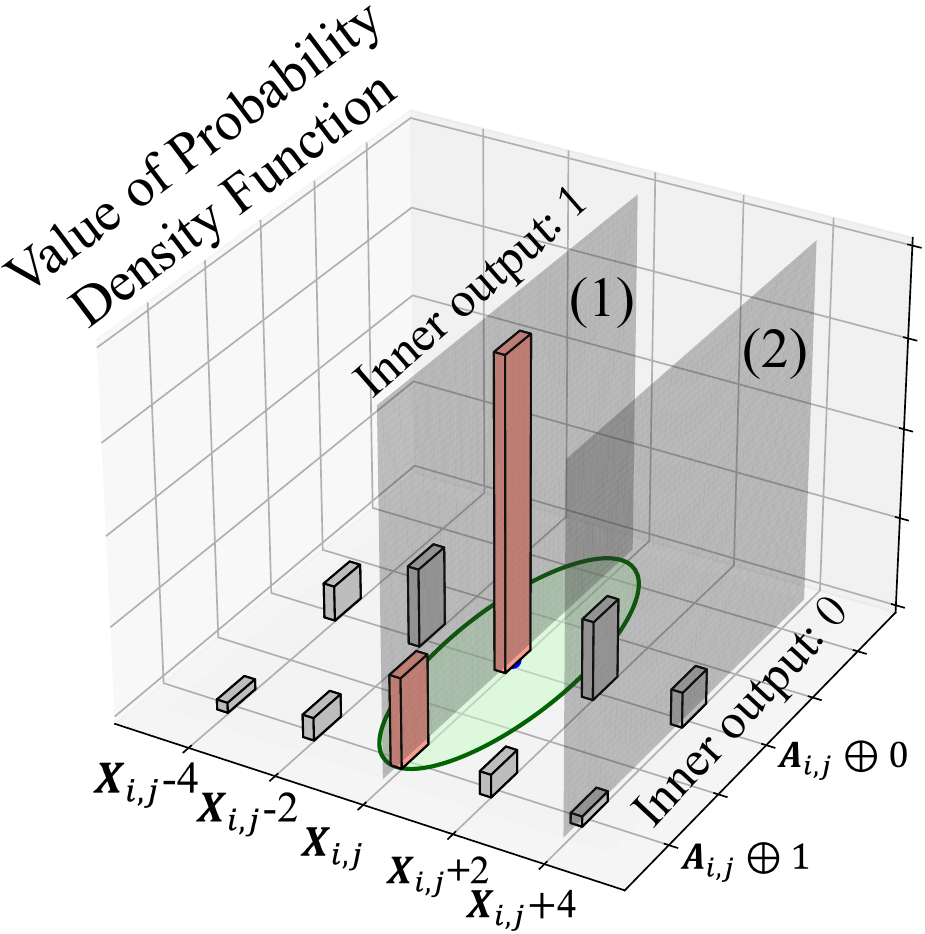}
 % \vspace{-5.8mm}
	\caption{An example illustrating how ELEGANT works with a different order to achieve certified defense.}  
	\label{dual_certification_2}  
\end{figure}

% \begin{figure}[t]
%     \centering
%     % \vspace{1mm}
% \includegraphics[width=0.95\linewidth]{images/acc_under_attack1.pdf}
%     % \vspace{-2.5mm}
%     \caption{The utility of GCN, E-GCN, FairGNN, and NIFTY under fairness attacks on German Credit. The shaded bar indicates that certified budget $\epsilon_{\bm{A}} \leq \|\bm{\Delta}_{\bm{A}}\|_{0}$ or $\epsilon_{\bm{X}} \leq  \|\bm{\Delta}_{\bm{X}}\|_{2}$.} 
%     \label{acc_under_attacks}
%     % \vspace{-6mm}
% \end{figure}

We first review the general pipeline to achieve certified fairness defense. Specifically, we first model the fairness attack and defense by formulating the bias indicator function $g$. Then, we achieve certified defense over the node attributes for $g$, which leads to classifier $\tilde{g}_{\bm{X}}$. Finally, we realize certified defense for $\tilde{g}_{\bm{X}}$ over the graph topology, which leads to classifier $\tilde{g}_{\bm{A},\bm{X}}$. In general, we may consider the certified defense over node attributes and graph topology as the inner certified classifier and outer certified classifier, respectively.
Now, a natural question is: \textit{is it possible to achieve certified defense in a different order, i.e., first achieve certified defense over the graph topology (as the inner classifier), and then realize certified defense over the node attributes (as the outer classifier)?} Note that this is not the research focus of this paper, but we will provide insights about this question.
In fact, it is also feasible to achieve certified defense in the reversed order compared with the approach presented in our paper. We provide an illustration in~\cref{dual_certification_2}.
We follow a similar setting to plot this figure as in Section 3.3. Specifically, in case (1), both $\bm{A}_{i,j} \oplus 0$ and $\bm{A}_{i,j} \oplus 1$ lead to a positive outcome for $g$; in case (2), both $\bm{A}_{i,j} \oplus 0$ and $\bm{A}_{i,j} \oplus 1$ lead to a negative outcome. However, considering the Gaussian distribution around $\bm{X}_{i,j}$, samples will fall around case (1) with a much higher number compared with case (2). 
%
% \begin{wrapfigure} [19]{r}{0.45\textwidth} \center
% \centering
% \vspace{-2.9mm}
% 	\includegraphics[width=1.0\linewidth]{images/dual_certification_2.pdf}
%  \vspace{-5.8mm}
% 	\caption{An example illustrating how ELEGANT works with a different order to achieve certified defense.}  
% 	\label{dual_certification_2}  
% \end{wrapfigure}
Hence, in this example, it would be reasonable to assume that the classifier with Bernoulli noise over graph topology (the inner certified classifier) will return 1 with a higher probability. This example illustrates how certification via a different order returns 1.

However, such a formulation bears higher computational costs in calculating the certified budgets. The reason is that we are able to utilize a closed-form solution to calculate $\epsilon_{\bm{X}}$ based on a set of Gaussian noise and the corresponding output from the bias indicator function.
However, based on a set of Bernoulli noise and the corresponding output from the bias indicator function, we will need to solve the optimization problem given in~\cref{a_r} to calculate $\epsilon_{\bm{A}}$, which bears a higher time complexity than calculating $\epsilon_{\bm{X}}$.
If we follow the strategy provided in Section 3.4 to calculate the inner and outer certification budgets, the certified budget of the inner certification will always be calculated multiple times, while the certified budget of the outer certification will only be calculated once.
Considering the high computational cost of calculating $\epsilon_{\bm{A}}$, we thus argue that it is more efficient to realize the certification over graph topology as the outer certified classifier.

\subsection{Time Complexity Analysis}

We now present a comprehensive analysis on the time complexity of ELEGANT. We present the analysis from both theoretical and experimental perspectives.

\noindent \textbf{Theoretical.} The time complexity is linear w.r.t. the total number of the random perturbations $N$, i.e., $\mathcal{O}(N)$. We perform 30,000 random perturbations over the span of node attributes and graph structure. We note that the actual running time is acceptable since the certification does not require re-training (which is the most costly process). In addition, all runnings do not rely on the prediction results from each other. Hence they can be paralleled altogether theoretically to further reduce the running time.

\noindent \textbf{Experimental.} We perform a study of running time, and we present the results in~\cref{running_time}. Specifically, we compare the running time of a successful certification under 30,000 random noise samples and a regular training-inference cycle with vanilla GCN. We observe that (1) although ELEGANT improves the computational cost compared with the vanilla GNN backbones, the running time remains acceptable; and (2) ELEGANT has less running time growth rate on larger datasets. For example, E-SAGE has around 10x running time on German Credit (a smaller dataset) while only around 4x on Credit Default (a larger dataset) compared to vanilla SAGE. Hence we argue that ELEGANT bears a high level of usability in terms of complexity and running time.

\subsection{Additional Results on Different GNN Backbones \& Baselines}

\label{explain_fairness}

We perform additional experiments over two popular GNNs, including APPNP~\cite{KlicperaBG19} and GCNII~\cite{chen2020simple}, to evaluate the generalization ability of ELEGANT onto different backbones. We present all numerical results in \cref{addi_acc} (in terms of accuracy), \cref{addi_fairness} (in terms of fairness), and \cref{addi_fcr} (in terms of FCR). We observe that ELEGANT achieves comparable utility, a superior level of fairness, and a large percentage of FCR. This verifies the satisfying usability of ELEGANT, which remains consistent with the paper.

In addition, we provide a detailed fairness comparison between ELEGANT and robust GNNs from \cite{jin2021node} and \cite{wu2019adversarial} in \cref{addi_baselines}. We observe that the best performances still come from the GNNs equipped with ELEGANT on all datasets. Hence we argue that ELEGANT exhibits satisfying performance in usability, which remains consistent with the discussion in the paper.

\noindent \textbf{Why ELEGANT Improves Fairness?} We note that improving fairness is a byproduct of ELEGANT, and our focus is to achieve certification over the fairness level of the prediction results. We now provide a detailed discussion about why fairness is improved here.  
First, existing works found that the distribution difference in the node attribute values and edge existence across different subgroups is a significant source of bias~\cite{dong2021edits,dai2021say,fan2021fair}. However, adding noise on both node attributes and graph topology may reduce such distributional divergence and mitigate bias. 
Second, As mentioned in \cref{certification_practice}, the proposed strategy to obtain the output predictions in ELEGANT is to select the fairest result among the output set $\hat{\mathcal{Y}}'$, where each output is derived based on a sample $\boldsymbol{\Gamma}_{\boldsymbol{A}}' \in \bar{\mathcal{A}}'$ (i.e., $\mathrm{argmin}_{\hat{\boldsymbol{Y}}'} \pi(\hat{\boldsymbol{Y}}', \mathcal{V}_{\text{tst}}) \;\; s.t. \; \hat{\boldsymbol{Y}}' \in \hat{\mathcal{Y}}'$). Such a strategy provides a large enough probability to achieve certification in light of Proposition 1. Meanwhile, we note that such a strategy also helps to significantly improve fairness since highly biased outputs are excluded.

\subsection{Complementary Results}

We provide the results in terms of $\Delta_{EO}$ for \cref{performance} in \cref{sup_eo}, and we present the results of the baselines for \cref{under_attacks} in \cref{sup_sp1}, \cref{sup_sp2}, \cref{sup_sp3}, and \cref{sup_sp4}.
For \cref{sup_eo}, we observe that ELEGANT does not constantly show a lower value of $\Delta_{EO}$. This is because the certification goal in \cref{performance} is $\Delta_{SP}$ instead of $\Delta_{EO}$. In addition, we note that debiasing existing GNN models is not the goal of this paper.
%
% In addition, we note that debiasing existing GNN models is not the goal of this paper. For \cref{sup_sp}, we also note that the certification goal here is $\Delta_{EO}$ instead of $\Delta_{SP}$. However, we note that ELEGANT still shows consistent debiasing performance as on the GCN backbone. Considering the competitive results given by ELEGANT over both $\Delta_{SP}$ and $\Delta_{EO}$, we argue that they are considered satisfying in terms of usability, which remains consistent with the discussion in the paper.
%
In addition, we provide the corresponding results in terms of accuracy for \cref{effectiveness-2} in \cref{acc_param_study_x} and \cref{acc_param_study_a}. We observe that although most performance remains stable, a stronger noise (i.e., larger $\sigma$ and smaller $\beta$) generally leads to worse but still comparable performance. This is consistent with the discussion in \cref{param_study}, and this has been taken into consideration in the discussion of the parameter selection strategy in \cref{param_study}.

\begin{table} [h]
% \vspace{-15mm}
% \vspace{-2.4mm}
\small
    \setlength{\tabcolsep}{8.5pt}
\renewcommand{\arraystretch}{1.0}
  \centering
  \caption{Comparison of running time (in seconds) on different datasets using different methods.}
  \label{running_time}
  \vspace{-3mm}
  \begin{tabular}{cccc}
\toprule
 & \textbf{German} & \textbf{Recidivism} & \textbf{Credit} \\
\midrule
    \textbf{SAGE} & 5.27 $\pm$ 0.38 & 34.14 $\pm$ 1.08 & 40.11 $\pm$ 0.36 \\
    \textbf{E-SAGE} & \textbf{53.23 $\pm$ 1.31} & \textbf{137.12 $\pm$ 58.66} & \textbf{157.51 $\pm$ 37.21} \\
    \textbf{GCN} & 5.59 $\pm$ 0.37 & 34.94 $\pm$ 1.16 & 40.59 $\pm$ 0.32 \\
    \textbf{E-GCN} & \textbf{53.79 $\pm$ 30.19} & \textbf{212.94 $\pm$ 10.38} & \textbf{214.11 $\pm$ 10.31} \\
    \textbf{JK} & 5.78 $\pm$ 0.43 & 34.68 $\pm$ 0.88 & 39.44 $\pm$ 1.56 \\
    \textbf{E-JK} & \textbf{59.99 $\pm$ 25.01} & \textbf{238.37 $\pm$ 1.81} & \textbf{252.99 $\pm$ 17.03} \\
\bottomrule
  \end{tabular}
\end{table}

\begin{table} [h]
% \vspace{-2mm}
\small
  \centering
    \setlength{\tabcolsep}{10.5pt}
\renewcommand{\arraystretch}{1.0}
  \caption{Performance comparison of classification accuracy. Numbers are in percentage.}
  \label{addi_acc}
  \vspace{-3mm}
  \begin{tabular}{cccc}
\toprule
    & \textbf{German} & \textbf{Recidivism} & \textbf{Credit} \\
\midrule
    \textbf{SAGE} & 67.3 $\pm$ 2.14    & 89.8 $\pm$ 0.66    & 75.9 $\pm$ 2.18    \\
    \textbf{E-SAGE} & \textbf{71.0 $\pm$ 1.27}    & \textbf{89.9 $\pm$ 0.90}    & \textbf{73.4 $\pm$ 0.50}    \\
    \textbf{GCN} & 59.6 $\pm$ 3.64    & 90.5 $\pm$ 0.73    & 65.8 $\pm$ 0.29    \\
    \textbf{E-GCN} & \textbf{58.2 $\pm$ 1.82}    & \textbf{89.6 $\pm$ 0.74}    & \textbf{65.2 $\pm$ 0.99}    \\
    \textbf{JK} & 63.3 $\pm$ 4.11    & 91.9 $\pm$ 0.54    & 76.6 $\pm$ 0.69    \\
    \textbf{E-JK} & \textbf{62.3 $\pm$ 4.07}    & \textbf{89.3 $\pm$ 0.33}    & \textbf{77.7 $\pm$ 0.27}    \\
    \textbf{APPNP} & 69.9 $\pm$ 2.17    & 95.3 $\pm$ 0.78    & 74.4 $\pm$ 3.05    \\
    \textbf{E-APPNP} & \textbf{69.4 $\pm$ 0.83}    & \textbf{95.9 $\pm$ 0.02} & \textbf{74.6 $\pm$ 0.32}    \\
    \textbf{GCNII} & 60.9 $\pm$ 1.00    & 90.4 $\pm$ 0.95    & 77.7 $\pm$ 0.22    \\
    \textbf{E-GCNII} & \textbf{60.4 $\pm$ 4.45}    & \textbf{88.8 $\pm$ 0.24}    & \textbf{77.6 $\pm$ 0.02}    \\
\bottomrule
  \end{tabular}
\end{table}

\begin{table}[h]
% \vspace{-2mm}
\small
  \centering
    \setlength{\tabcolsep}{10.5pt}
\renewcommand{\arraystretch}{1.0}
  \caption{Comparison of fairness (measured with $\Delta_{SP}$). Numbers are in percentage.}
  \label{addi_fairness}
  \vspace{-3mm}
  \begin{tabular}{cccc}
\toprule
    & \textbf{German} & \textbf{Recidivism} & \textbf{Credit} \\
\midrule
    \textbf{SAGE} & 50.6 $\pm$ 15.9    & 9.36 $\pm$ 3.15    & 13.0 $\pm$ 4.01    \\
    \textbf{E-SAGE} & \textbf{16.3 $\pm$ 10.9}    & \textbf{6.39 $\pm$ 2.85}    & \textbf{8.94 $\pm$ 0.99}    \\
    \textbf{GCN} & 37.4 $\pm$ 3.24    & 10.1 $\pm$ 3.01    & 11.1 $\pm$ 3.22    \\
    \textbf{E-GCN} & \textbf{3.52 $\pm$ 3.77}    & \textbf{9.56 $\pm$ 3.22}    & \textbf{7.28 $\pm$ 1.46}    \\
    \textbf{JK} & 41.2 $\pm$ 18.1    & 10.1 $\pm$ 3.15    & 9.24 $\pm$ 0.60    \\
    \textbf{E-JK} & \textbf{22.4 $\pm$ 1.95}    & \textbf{6.26 $\pm$ 2.78}    & \textbf{3.37 $\pm$ 2.64}    \\
    \textbf{APPNP} & 27.4 $\pm$ 4.81    & 9.71 $\pm$ 3.57    & 12.3 $\pm$ 3.14    \\
    \textbf{E-APPNP} & \textbf{13.1 $\pm$ 5.97}    & \textbf{2.23 $\pm$ 0.04} & \textbf{10.8 $\pm$ 0.07}    \\
    \textbf{GCNII} & 51.4 $\pm$ 0.36    & 9.70 $\pm$ 3.37    & 7.62 $\pm$ 0.29    \\
    \textbf{E-GCNII} & \textbf{24.9 $\pm$ 0.47}    & \textbf{3.78 $\pm$ 0.93}    & \textbf{1.72 $\pm$ 0.81}    \\
\bottomrule
  \end{tabular}
\end{table}

\begin{table}[h]
% \vspace{-2mm}
\small
  \centering
    \setlength{\tabcolsep}{12.5pt}
\renewcommand{\arraystretch}{1.0}
  \caption{Performance in FCR on different datasets and backbone GNNs. Numbers are in percentage.}
  \label{addi_fcr}
  \vspace{-3mm}
  \begin{tabular}{cccc}
\toprule
    & \textbf{German} & \textbf{Recidivism} & \textbf{Credit} \\
\midrule
    \textbf{E-SAGE} & 98.7 $\pm$ 1.89   & 94.3 $\pm$ 6.65   & 94.3 $\pm$ 3.3   \\
    \textbf{E-GCN} & 96.3 $\pm$ 1.89   & 96.0 $\pm$ 3.56   & 92.7 $\pm$ 5.19   \\
    \textbf{E-JK} & 97.0 $\pm$ 3.00   & 89.5 $\pm$ 10.5   & 99.3 $\pm$ 0.47   \\
    \textbf{E-APPNP} & 97.8 $\pm$ 3.14   & 87.1 $\pm$ 3.79 & 95.5 $\pm$ 6.43   \\
    \textbf{E-GCNII} & 94.7 $\pm$ 5.27   & 92.9 $\pm$ 9.93   & 99.0 $\pm$ 1.41   \\
\bottomrule
  \end{tabular}
\end{table}

\begin{table} [h]
% \vspace{-2mm}
\small
  \centering
    \setlength{\tabcolsep}{11.5pt}
\renewcommand{\arraystretch}{1.0}
  \caption{Comparison of fairness (measured with $\Delta_{SP}$). Numbers are in percentage.}
  \label{addi_baselines}
  \vspace{-2mm}
  \begin{tabular}{cccc}
\toprule
    & \textbf{German} & \textbf{Recidivism} & \textbf{Credit} \\
\midrule
    \textbf{SAGE} & 50.6 $\pm$ 15.9    & 9.36 $\pm$ 3.15    & 13.0 $\pm$ 4.01    \\
    \textbf{E-SAGE} & \textbf{16.3 $\pm$ 10.9}    & \textbf{6.39 $\pm$ 2.85}    & \textbf{8.94 $\pm$ 0.99}    \\
    \textbf{GCN} & 37.4 $\pm$ 3.24    & 10.1 $\pm$ 3.01    & 11.1 $\pm$ 3.22    \\
    \textbf{E-GCN} & \textbf{3.52 $\pm$ 3.77}    & \textbf{9.56 $\pm$ 3.22}    & \textbf{7.28 $\pm$ 1.46}    \\
    \textbf{JK} & 41.2 $\pm$ 18.1    & 10.1 $\pm$ 3.15    & 9.24 $\pm$ 0.60    \\
    \textbf{E-JK} & \textbf{22.4 $\pm$ 1.95}    & \textbf{6.26 $\pm$ 2.78}    & \textbf{3.37 $\pm$ 2.64}    \\
    % \textbf{GCN-Jaccard} & 14.75 $\pm$ 18.30 & 9.59 $\pm$ 0.65 & 3.84 $\pm$ 0.17 \\
    % \textbf{SimP-GCN} & 3.66 $\pm$ 0.52 & 8.04 $\pm$ 2.97 & 7.10 $\pm$ 5.10 \\
    \textbf{\cite{jin2021node}} & 14.8 $\pm$ 18.3 & 9.59 $\pm$ 0.65 & 3.84 $\pm$ 0.17 \\
    \textbf{\cite{wu2019adversarial}} & 3.66 $\pm$ 0.52 & 8.04 $\pm$ 2.97 & 7.10 $\pm$ 5.10 \\
    % \textbf{Pro-GNN} & 0.85 $\pm$ 0.21 & 6.53 $\pm$ 1.48 & OOM \\
\bottomrule
  \end{tabular}
\end{table}

\begin{table}[h]
% \vspace{-2mm}
\small
  \centering
    \setlength{\tabcolsep}{11.5pt}
\renewcommand{\arraystretch}{1.0}
  \caption{The $\Delta_{EO}$ of \cref{performance} in the paper. All numerical numbers are in percentage.}
  \label{sup_eo}
  \vspace{-2mm}
  \begin{tabular}{cccc}
\toprule
    & \textbf{German} & \textbf{Recidivism} & \textbf{Credit} \\
\midrule
    \textbf{SAGE} & 30.43 $\pm$ 0.07   & 3.71 $\pm$ 0.01   & 5.56 $\pm$ 0.03   \\
    \textbf{E-SAGE} & \textbf{12.21 $\pm$ 0.04}   & \textbf{6.95 $\pm$ 0.02}   & \textbf{7.18 $\pm$ 0.01}   \\
    \textbf{GCN} & 35.19 $\pm$ 0.07   & 5.06 $\pm$ 0.01   & 11.9 $\pm$ 0.02   \\
    \textbf{E-GCN} & \textbf{8.32 $\pm$ 0.03}   & \textbf{1.39 $\pm$ 0.01}   & \textbf{6.24 $\pm$ 0.02}   \\
    \textbf{JK} & 18.10 $\pm$ 0.13   & 3.02 $\pm$ 0.01   & 9.47 $\pm$ 0.02   \\
    \textbf{E-JK} & \textbf{23.68 $\pm$ 0.02}   & \textbf{2.74 $\pm$ 0.01}   & \textbf{2.55 $\pm$ 0.01}   \\
\bottomrule
  \end{tabular}
\end{table}

% \begin{table}[h]
% \small
%   \centering
%     \setlength{\tabcolsep}{18.5pt}
% \renewcommand{\arraystretch}{1.1}
%   \caption{The results in terms of $\Delta_{SP}$ for \cref{under_attacks}. All numerical numbers are in percentage.}
%   \label{sup_sp}
%   \begin{tabular}{ccccc}
% \toprule
%     & \textbf{(2}$^0$, \textbf{10}$^{-1}$\textbf{)} & \textbf{(2}$^1$, \textbf{10}$^{0}$\textbf{)} & \textbf{(2}$^2$, \textbf{10}$^{1}$\textbf{)} & \textbf{(2}$^3$, \textbf{10}$^{2}$\textbf{)} \\
% \midrule
%     \textbf{GCN} & 41.4 $\pm$ 8.8 & 41.4 $\pm$ 8.8 & 41.4 $\pm$ 9.4 & 42.6 $\pm$ 5.9 \\
%     \textbf{NIFTY} & 33.9 $\pm$ 3.3 & 36.1 $\pm$ 9.8 & 42.1 $\pm$ 4.9 & 44.5 $\pm$ 12 \\
%     \textbf{FairGNN} & 16.4 $\pm$ 6.4 & 16.8 $\pm$ 6.4 & 16.0 $\pm$ 6.0 & 16.9 $\pm$ 5.6 \\
%     \textbf{E-GCN} & \textbf{10.8 $\pm$ 0.1}   & \textbf{10.8 $\pm$ 0.7}   & \textbf{10.8 $\pm$ 0.5}   & \textbf{13.9 $\pm$ 4.0}   \\
% \bottomrule
%   \end{tabular}
% \end{table}

\begin{table}[]
\small
  \centering
    \setlength{\tabcolsep}{7.5pt}
\renewcommand{\arraystretch}{1.0}
  \caption{The results under ($2^0$, $10^{-1}$) in terms of node classification accuracy, AUC score, F1 score, $\Delta_{SP}$, and $\Delta_{EO}$ \cref{under_attacks}. All numerical numbers are in percentage.}
    \vspace{-2mm}
  \label{sup_sp1}
\begin{tabular}{cccccc}
\toprule
\textbf{($2^0$, $10^{-1}$)} & \textbf{Accuracy} & \textbf{AUC} & \textbf{F1 Score} & \textbf{$\Delta_{SP}$} & \textbf{$\Delta_{EO}$} \\
\midrule
\textbf{GCN}                & 58.4\%            & 66.4\%       & 63.9\%            & 41.4\%                 & 33.4\%                 \\
\textbf{NIFTY}              & 61.2\%            & 68.1\%       & 66.2\%            & 33.9\%                 & 13.3\%                 \\
\textbf{FairGNN}            & 55.2\%            & 62.2\%       & 61.4\%            & 16.4\%                 & 5.99\%  \\
\bottomrule
\end{tabular}
\end{table}

\begin{table}[]
% \vspace{-3mm}
\small
  \centering
    \setlength{\tabcolsep}{7.5pt}
\renewcommand{\arraystretch}{1.0}
  \caption{The results under ($2^1$, $10^{0}$) in terms of node classification accuracy, AUC score, F1 score, $\Delta_{SP}$, and $\Delta_{EO}$ \cref{under_attacks}. All numerical numbers are in percentage.}
    \vspace{-2mm}
  \label{sup_sp2}
\begin{tabular}{cccccc}
\toprule
\textbf{($2^1$, $10^{0}$)} & \textbf{Accuracy} & \textbf{AUC} & \textbf{F1 Score} & \textbf{$\Delta_{SP}$} & \textbf{$\Delta_{EO}$} \\
\midrule
\textbf{GCN}               & 58.4\%            & 66.4\%       & 63.9\%            & 41.4\%                 & 33.4\%                 \\
\textbf{NIFTY}             & 61.2\%            & 68.2\%       & 66.2\%            & 36.1\%                 & 13.3\%                 \\
\textbf{FairGNN}           & 55.2\%            & 62.2\%       & 61.4\%            & 16.8\%                 & 7.77\%  \\
\bottomrule
\end{tabular}
\end{table}

\begin{table}[]
\small
  \centering
    \setlength{\tabcolsep}{7.5pt}
\renewcommand{\arraystretch}{1.0}
  \caption{The results under ($2^2$, $10^{1}$) in terms of node classification accuracy, AUC score, F1 score, $\Delta_{SP}$, and $\Delta_{EO}$ for \cref{under_attacks}. All numerical numbers are in percentage.}
  \vspace{-2mm}
  \label{sup_sp3}
\begin{tabular}{cccccc}
\toprule
\textbf{($2^2$, $10^{1}$)} & \textbf{Accuracy} & \textbf{AUC} & \textbf{F1 Score} & \textbf{$\Delta_{SP}$} & \textbf{$\Delta_{EO}$} \\
\midrule
\textbf{GCN}               & 58.0\%            & 66.6\%       & 63.7\%            & 41.4\%                 & 37.8\%                 \\
\textbf{NIFTY}             & 61.2\%            & 68.1\%       & 66.0\%            & 42.1\%                 & 13.3\%                 \\
\textbf{FairGNN}           & 55.6\%            & 62.1\%       & 61.9\%            & 16.0\%                 & 9.56\%  \\
\bottomrule
\end{tabular}
\end{table}

\begin{table}[]
\small
  \centering
    \setlength{\tabcolsep}{7.5pt}
\renewcommand{\arraystretch}{1.0}
  \caption{The results under ($2^3$, $10^{2}$) in terms of node classification accuracy, AUC score, F1 score, $\Delta_{SP}$, and $\Delta_{EO}$ for \cref{under_attacks}. All numerical numbers are in percentage.}
    \vspace{-2mm}
  \label{sup_sp4}
\begin{tabular}{cccccc}
\toprule
\textbf{($2^3$, $10^{2}$)} & \textbf{Accuracy} & \textbf{AUC} & \textbf{F1 Score} & \textbf{$\Delta_{SP}$} & \textbf{$\Delta_{EO}$} \\
\midrule
\textbf{GCN}               & 58.0\%            & 67.7\%       & 63.7\%            & 42.6\%                 & 45.7\%                 \\
\textbf{NIFTY}             & 58.8\%            & 67.3\%       & 63.1\%            & 44.5\%                 & 19.4\%                 \\
\textbf{FairGNN}           & 54.4\%            & 61.4\%       & 61.0\%            & 16.9\%                 & 23.8\%  \\
\bottomrule
\end{tabular}
\end{table}

\begin{table}[h]
% \vspace{-0.7mm}
% \vspace{-3mm}
\small
  \centering
    \setlength{\tabcolsep}{6.8pt}
\renewcommand{\arraystretch}{1.0}
  \caption{Classification accuracy in \cref{param1} with different settings. Numbers are in percentage.}
  \label{acc_param_study_x}
  \vspace{-2mm}
  \begin{tabular}{ccccc}
\toprule
    & \textbf{5e-3} & \textbf{5e-2} & \textbf{5e-1} & \textbf{5e0} \\
\midrule
    \textbf{0} & 57.50 $\pm$ 1.50   & 57.51 $\pm$ 1.63   & 57.50 $\pm$ 1.58   & 55.67 $\pm$ 2.00   \\
    \textbf{1e-3} & 57.50 $\pm$ 1.51   & 57.51 $\pm$ 1.63   & 57.50 $\pm$ 1.58   & 55.67 $\pm$ 2.00   \\
    \textbf{5e-3} & 57.49 $\pm$ 1.52   & 57.50 $\pm$ 1.64   & 57.50 $\pm$ 1.58   & 55.67 $\pm$ 2.00   \\
    \textbf{1e-2} & 57.55 $\pm$ 1.50   & 57.51 $\pm$ 1.65   & 57.50 $\pm$ 1.58   & 55.67 $\pm$ 2.00   \\
    \textbf{5e-2} & N/A & 57.57 $\pm$ 1.59   & 57.50 $\pm$ 1.58   & 55.67 $\pm$ 2.00   \\
    \textbf{1e-1} & N/A & 57.53 $\pm$ 1.57   & 57.50 $\pm$ 1.59   & 55.67 $\pm$ 2.00   \\
    \textbf{5e-1} & N/A & N/A & 57.49 $\pm$ 1.60   & 55.67 $\pm$ 2.00   \\
    \textbf{1e0} & N/A & N/A & 57.40 $\pm$ 1.58   & 55.67 $\pm$ 2.00   \\
    \textbf{5e0} & N/A & N/A & N/A & 55.76 $\pm$ 1.86   \\
\bottomrule
  \end{tabular}
\end{table}

\begin{table}[h]
% \vspace{-2mm}
\small
  \centering
    \setlength{\tabcolsep}{6.5pt}
\renewcommand{\arraystretch}{1.0}
  \caption{Classification accuracy in \cref{param2} with different settings. Numbers are in percentage.}
  \label{acc_param_study_a}
  \vspace{-2mm}
  \begin{tabular}{ccccc}
\toprule
    & \textbf{0.6} & \textbf{0.7} & \textbf{0.8} & \textbf{0.9} \\
\midrule
    \textbf{0} & 63.71 $\pm$ 0.64   & 64.03 $\pm$ 0.66   & 65.87 $\pm$ 0.49   & 64.88 $\pm$ 0.46   \\
    \textbf{2}$^0$ & 63.71 $\pm$ 0.64   & 64.03 $\pm$ 0.66   & 65.87 $\pm$ 0.49   & 64.88 $\pm$ 0.46   \\
    \textbf{2}$^1$ & 63.67 $\pm$ 0.64   & 64.04 $\pm$ 0.67   & N/A & N/A \\
    \textbf{2}$^2$ & 63.69 $\pm$ 0.67   & N/A & N/A & N/A \\
    \textbf{2}$^3$ & N/A & N/A & N/A & N/A \\
    \textbf{2}$^4$ & N/A & N/A & N/A & N/A \\
\bottomrule
  \end{tabular}
\end{table}

% \begin{table}[h]
% \small
%   \centering
%     \setlength{\tabcolsep}{25.5pt}
% \renewcommand{\arraystretch}{1.2}
%   \caption{Comparison on fairness (measured with $\Delta_{SP}$) between vanilla GNNs, GNNs equipped with ELEGANT, and popular baselines including FairGNN and NIFTY.}
%     \vspace{-1mm}
%   \begin{tabular}{cccc}
% \hline
% \hline
%     & \textbf{German} & \textbf{Recidivism} & \textbf{Credit} \\
% \hline
%     \textbf{SAGE} & 50.6 $\pm$ 15.9   & 9.36 $\pm$ 3.15   & 13.0 $\pm$ 4.01   \\
%     \textbf{E-SAGE} & \textbf{16.3 $\pm$ 10.9}   & \textbf{6.39 $\pm$ 2.85}   & \textbf{8.94 $\pm$ 0.99}   \\
%     \textbf{GCN} & 37.4 $\pm$ 3.24   & 10.1 $\pm$ 3.01   & 11.1 $\pm$ 3.22   \\
%     \textbf{E-GCN} & \textbf{3.52 $\pm$ 3.77}   & \textbf{9.56 $\pm$ 3.22}   & \textbf{7.28 $\pm$ 1.46}   \\
%     \textbf{JK} & 41.2 $\pm$ 18.1   & 10.1 $\pm$ 3.15   & 9.24 $\pm$ 0.60   \\
%     \textbf{E-JK} & \textbf{22.4 $\pm$ 1.95}   & \textbf{6.26 $\pm$ 2.78}   & \textbf{3.37 $\pm$ 2.64}   \\
%     \textbf{FairGNN} & 10.9 $\pm$ 5.65   & 8.01 $\pm$ 0.39   & 4.48 $\pm$ 0.25   \\
%     \textbf{NIFTY} & 23.7 $\pm$ 4.49   & 9.73 $\pm$ 0.78   & 9.79 $\pm$ 1.71   \\
% \hline
% \hline
%   \end{tabular}
%   % \caption{[1] Jin, W., et al. Node similarity preserving graph convolutional networks. In WSDM, 2021. }
% % \vspace{-4ex}
% \end{table}

\section{Additional Discussion}

\subsection{Why Certify A Classifier on top of An Optimized GNN? }

We note that the rationale of certified defense is to provably maintain the classification results against attacks. Under this context, most existing works on certifying an existing deep learning model focus on certifying a specific predicted label over a given data point. Here, the prediction results to be certified are classification results. Correspondingly, these works are able to certify the model itself. 

However, the strategy above is not feasible in our studied problem. This is because we seek to certify the level of fairness of a group of nodes. The value of such a group-level property cannot be directly considered as a classification result, and thus they are not feasible to be directly certified. Therefore, we proposed to first formulate a classifier on top of an optimized GNN. As such, achieving certification becomes feasible. In fact, this also serves as one of the contributions of our work.

\begin{table*}[h]
  \centering
      \vspace{-2mm}
    \setlength{\tabcolsep}{12.5pt}
  \caption{Experimental results on Pokec-z and Pokec-n datasets.}
    \vspace{-2mm}
  \label{addi_reuslts}
  % \small
\begin{tabular}{cccccccc}
\toprule
\multicolumn{1}{l}{} & \multicolumn{3}{c}{\textbf{Pokec-z}}                                                       &   & \multicolumn{3}{c}{\textbf{Pokec-n}}                                                         \\
            \cline{2-4}  \cline{6-8}
\multicolumn{1}{l}{} & \textbf{ACC ($\uparrow$)}     & \textbf{Bias ($\downarrow$)} & \textbf{FCR ($\uparrow$)}  &    & \textbf{ACC ($\uparrow$)}     & \textbf{Bias ($\downarrow$)} & \textbf{FCR ($\uparrow$)}     \\
\midrule
\textbf{SAGE}        & 63.13 $\pm$ 0.37              & 6.29 $\pm$ 0.20 & -      &                        & 57.60 $\pm$ 2.74 & 6.43 $\pm$ 1.08 & -                             \\
\textbf{E-SAGE}      & 62.09 $\pm$ 2.22              & 4.18 $\pm$ 1.87 & 94.00 $\pm$ 5.66 &  & 60.74 $\pm$ 1.87 & 5.23 $\pm$ 0.13 & 91.50 $\pm$ 7.78 \\
\textbf{GCN}         & 64.89 $\pm$ 0.93 & 3.44 $\pm$ 0.16 & -        &                      & 59.86 $\pm$ 0.09 & 4.26 $\pm$ 0.40 & -                             \\
\textbf{E-GCN}       & 62.38 $\pm$ 0.26 & 1.52 $\pm$ 0.49 & 90.50 $\pm$ 0.71 &  & 59.83 $\pm$ 4.16 & 3.23 $\pm$ 1.20 & 94.00 $\pm$ 8.49 \\
\textbf{JK}          & 63.06 $\pm$ 1.00 & 7.89 $\pm$ 3.05 & -      &                        & 57.70 $\pm$ 1.05 & 8.81 $\pm$ 2.46 & -                             \\
\textbf{E-JK}        & 61.49 $\pm$ 2.55 & 3.63 $\pm$ 2.18 & 87.50 $\pm$ 2.12 &  & 61.19 $\pm$ 0.50 & 5.60 $\pm$ 0.01 & 93.00 $\pm$ 9.90 \\
\bottomrule
\end{tabular}
\end{table*}
\vskip -20ex

\subsection{What Is the Difference Between the Attacking Performance of GNNs and the Fairness of GNNs?}

In traditional attacks over the performance of GNNs, the objective of the attacker is simply formulated as having false predictions on as many nodes as possible, such that the overall performance is jeopardized. However, in attacks over the fairness of GNNs, whether the goal of the attacker can be achieved is jointly determined by the GNN predictions over all nodes. Such node-level dependency in achieving the attacking goal makes the defense over fairness attacks more difficult, since the defense cannot be directly performed at the node level but at the model level instead. Correspondingly, this necessitates (1) constructing an additional classifier as discussed in the previous reply, and (2) additional theoretical analysis over the constructed classifier as in Theorem 1 and 2 to achieve certification.

\subsection{Certification Without Considering the Binary Sensitive Attribute}

We utilize the most widely studied setting to assume the sensitive attributes are binary. However, our certification approach is not designed to be tailored to the sensitive attributes. Therefore, our approach can be easily extended to scenarios where the sensitive attributes are multi-class and continuous by adopting the corresponding fairness metric as the function $\pi(\cdot)$ in Definition 1.

\subsection{How Do the Main Theoretical Findings Differ From Existing Works on Robustness Certification of GNNs on Regular Attacks?}

Most existing works for robustness certification can only defend against attacks on either node attributes or graph structure. Due to the multi-modal input data of GNNs, existing works usually fail to handle the attacks over node attributes and graph structure at the same time. However, ELEGANT is able to defend against attacks over both data modalities. This necessitates using both continuous and discrete noises for smoothing and the analysis for joint certification in the span of the two input data modalities.
% (as shown in Figure 1).

% \subsection{Discussion on Recent Works Tackling Graph Robustness}

% Here we mainly focus on discussing related works including \cite{bojchevski2019adversarial,chang2020restricted,chang2021not,zhou2023robust}. First, the attacking scenario studied in this paper is model evasion attack. Therefore, we originally did not involve \cite{bojchevski2019adversarial} and \cite{chang2021not} since they study a different problem of data poisoning attack. In addition, the approach proposed in this paper does not have any overlap with the spectral analysis of GNNs, and thus the proposed approach can be generalized to both spatial and spectral GNNs. Therefore, we did not involve \cite{chang2020restricted} and \cite{zhou2023robust} since they solely focus on the spectral analysis of spectral GNNs.

\subsection{Difference with Existing Works}

Here we mainly focus on discussing the difference between this work and \cite{bojchevski2020efficient}.
We note that (1) the randomized smoothing technique adopted in \cite{bojchevski2020efficient} is different from the proposed randomized smoothing approach on the graph topology in this paper and (2) the techniques in \cite{bojchevski2020efficient} tackle a different problem from this paper. We elaborate on more details below. %
The techniques in \cite{bojchevski2020efficient} are different from this paper. Although both randomized smoothing approaches are able to handle binary data, we note that the randomized smoothing approach proposed in \cite{bojchevski2020efficient} is data-dependent. However, the proposed randomized smoothing approach in this paper is data-independent. We note that in practice, a data-independent approach enables practitioners to pre-generate noises, which enhances usability.

The studied problem in \cite{bojchevski2020efficient} is different from this paper. Although the authors claimed to achieve a joint certificate for graph topology and node attributes in \cite{bojchevski2020efficient}, all node attributes are assumed to be binary, which can only be applied to cases where these attributes are constructed as bag-of-words representations (as mentioned in the second last paragraph in the Introduction of \cite{bojchevski2020efficient}). However, in this work, we follow a more realistic setting where only graph topology is assumed to be binary while node attributes are considered as continuous. This makes the problem more difficult to handle, since different strategies should be adopted for different data modalities. In summary, compared with \cite{bojchevski2020efficient}, the problem studied in this paper is more realistic and more suitable for GNNs.

Based on the discussion above, we would like to note that no existing work can be directly adopted to tackle the studied problem in this paper, which further evidenced the novelty of this work.

\subsection{Additional Experiments on Other Datasets}

To further validate the performance of the proposed method, we also perform experiments with the same commonly used popular GNN backbone models (as Section 4.2) on two Pokec datasets, namely Pokec-z and Pokec-n. We present the experimental results in \cref{addi_reuslts}, where all numerical numbers are in percentage. We observe that (1) the GNNs equipped with ELEGANT achieve comparable node classification accuracy; (2) the GNNs equipped with ELEGANT achieve consistently lower levels of bias; and (3) the values of the Fairness Certification Rate (FCR) for all GNNs equipped with ELEGANT exceed 90\%, exhibiting satisfying usability. All three observations are consistent with the experimental results and the corresponding discussion presented in Section 4.2. Therefore, we argue that the effectiveness of ELEGANT is not determined by the dataset and is well generalizable over different datasets.

\subsection{Scalability of ELEGANT}

In this subsection, we discuss the scalability of ELEGANT.
Specifically, we note that if the Gaussian and Bernoulli noise is directly added over the whole graph, scaling to larger graphs would be difficult. However, the proposed approach can be easily extended to the batch training case, which has been widely adopted by existing scalable GNNs. Specifically, a commonly adopted batch training strategy of scalable GNNs is to only input a node and its surrounding subgraph into the GNN, since the prediction of GNNs only depends on the information of the node itself and its multi-hop neighbors, and the number of hops is determined by the layer number of GNNs. Since the approach proposed in our paper aligns with the basic pipeline of GNNs, the perturbation can also be performed for each specific batch of nodes. In this case, all theoretical analyses in this paper still hold, since they also do not rely on the assumption of non-batch training. Therefore, we would like to argue that the proposed approach can be easily scaled to large graphs.

\end{document}